\newtheorem{theorem}{Theorem}
\newtheorem{lemma}[theorem]{Lemma}
\newtheorem{corollary}[theorem]{Corollary}
\newcommand{\R}{{\mathbb{R}}}
\newcommand{\posR}{{\mathbb{R}}_{\ge 0}}
\newcommand{\chr}[1]{\mathbf{1}_{#1}}
\newcommand{\I}{\mathcal{I}}
\newcommand{\sC}{\mathcal{C}}
\DeclareMathOperator*{\Ex}{\mathbb{E}}
\def\opt{\ensuremath{\mathrm{OPT}}\xspace}
\def\set#1{\left\{#1\right\}}
\def\card#1{\left|#1\right|}
\def\Greedy{\textsc{Greedy}\xspace}
\def\GreeDI{\textsc{GreeDi}\xspace}
\def\RandGreeDI{\textsc{RandGreeDi}\xspace}
\def\SamplePrune{\textsc{Sample{\&}Prune}\xspace}
\def\NMRandGreeDI{\textsc{NMRandGreeDi}\xspace}
\def\Alg{\textsc{Alg}\xspace}
\def\etal{\emph{et al.}\xspace}
\def\sep{\;|\;}
\def\vect#1{ {\mathbf{#1}} }
\def\script#1{\mathcal{#1}}
\begin{document}

\title{The Power of Randomization\\
Distributed Submodular Maximization on Massive
Datasets\footnote{The authors are listed alphabetically.}}

\author[1]{Rafael da Ponte Barbosa}
\author[1]{Alina Ene}
\author[2]{Huy L. Nguy\~{\^{e}}n}
\author[1]{Justin Ward\thanks{Work supported by EPSRC grant EP/J021814/1.}}
\affil[1]{Department of Computer Science and DIMAP \authorcr
			University of Warwick \authorcr  {\tt \{rafael, A.Ene, J.D.Ward\}@dcs.warwick.ac.uk}}
\affil[2]{Simons Institute \authorcr University of California,
Berkeley \authorcr {\tt hlnguyen@cs.princeton.edu}}

\maketitle

\begin{abstract}
  A wide variety of problems in machine learning, including exemplar
  clustering, document summarization, and sensor placement,  can be
  cast as constrained submodular maximization problems.
  Unfortunately, the resulting submodular optimization problems are
  often too large to be solved on a single machine. We develop a
  simple distributed algorithm that is embarrassingly parallel and it
  achieves provable, constant factor, worst-case approximation
  guarantees.  In our experiments, we demonstrate its efficiency in
  large problems with different kinds of constraints with objective
  values always close to what is achievable in the centralized
  setting.
\end{abstract}

\section{Introduction}
A set function $f: 2^V \rightarrow \posR$ on a ground set $V$ is
\emph{submodular} if $f(A) + f(B) \geq f(A \cap B) + f(A \cup B)$ for
any two sets $A, B \subseteq V$. Several problems of interest can be
modeled as maximizing a submodular objective function subject to
certain constraints:
	\[ \max f(A) \text{ subject to } A \in \sC, \]
where $\sC \subseteq 2^V$ is the family of feasible solutions.
Indeed, the general meta-problem of optimizing a constrained submodular
function captures a wide variety of problems in machine learning
applications, including exemplar clustering, document summarization,
sensor placement, image segmentation, maximum entropy sampling, and
feature selection problems.  

At the same time, in many of these applications, the amount of data
that is collected is quite large and it is growing at a very fast
pace. For example, the wide deployment of sensors has led to the
collection of large amounts of measurements of the physical world.
Similarly, medical data and human activity data are being captured
and stored at an ever increasing rate and level of detail. This data
is often high-dimensional and complex, and it needs to be stored and
processed in a distributed fashion.

In these settings, it is apparent that the classical algorithmic
approaches are no longer suitable and new algorithmic insights are
needed in order to cope with these challenges. The algorithmic
challenges stem from the following competing demands imposed by huge
datasets: the computations need to process the data that is
distributed across several machines using a minimal amount of
communication and synchronization across the machines, and at the
same time deliver solutions that are competitive with the centralized
solution on the entire dataset.

The main question driving the current work is whether these competing
goals can be reconciled. More precisely, can we deliver very good
approximate solutions with minimal communication overhead? Perhaps
surprisingly, the answer is yes; there is a very simple distributed
greedy algorithm that is embarrassingly parallel and it achieves
provable, constant factor, worst-case approximation guarantees. Our
algorithm can be easily implemented in a parallel model of
computation such as MapReduce \cite{DG08}.

\subsection{Background and Related Work}

In the MapReduce model, there are $m$ independent machines. Each of
the machines has a limited amount of memory available. In our
setting, we assume that the data is much larger than any single
machine's memory and so must be distributed across all of the
machines. At a high level, a MapReduce computation proceeds in
several rounds. In a given round, the data is shuffled among the
machines. After the data is distributed, each of the machines
performs some computation on the data that is available to it. The
output of these computations is either returned as the final result
or becomes the input to the next MapReduce round. We emphasize that
the machines can only communicate and exchange data during the
shuffle phase.

In order to put our contributions in context, we briefly discuss two
distributed greedy algorithms that achieve complementary trade-offs
in terms of approximation guarantees and communication overhead.

Mirzasoleiman \etal \cite{MKSK13} give a distributed algorithm,
called \GreeDI, for maximizing a monotone submodular function subject
to a cardinality constraint. The \GreeDI algorithm partitions the
data arbitrarily on the machines and on each machine it runs the
classical \Greedy algorithm to select a feasible subset of the items
on that machine. The \Greedy solutions on these machines are then
placed on a single machine and the \Greedy algorithm is used once
more to select the final solution. The \GreeDI algorithm is very
simple and embarrassingly parallel, but its worst-case approximation
guarantee\footnote{Mirzasoleiman \etal \cite{MKSK13} give a
family of instances where the approximation achieved is only $1 /
\min\set{k, m}$ if the solution picked on each of the machines is the
optimal solution for the set of items on the machine. These instances
are not hard for the \GreeDI algorithm. We show in Sections
\ref{app:det-greedi-analysis} and \ref{app:det-greedi-tight} that the
\GreeDI algorithm achieves an $1 / \Theta \left(\min\set{\sqrt{k}, m}
\right)$ approximation.} is $1 /\Theta\left(\min\set{\sqrt{k},
m}\right)$, where $m$ is the number of machines and $k$ is the
cardinality constraint. Despite this, Mirzasoleiman \etal show that
the \GreeDI algorithm achieves very good approximations for datasets
with geometric structure.

Kumar \etal \cite{KMVV13} give distributed algorithms for
maximizing a monotone submodular function subject to a cardinality or
more generally, a matroid constraint. Their algorithm combines the
Threshold Greedy algorithm of \cite{GuptaRST10} with a sample and
prune strategy. In each round, the algorithm samples a small subset
of the elements that fit on a single machine and runs the Threshold
Greedy algorithm on the sample in order to obtain a feasible
solution. This solution is then used to prune some of the elements in
the dataset and reduce the size of the ground set. The \SamplePrune
algorithms achieve constant factor approximation guarantees but they
incur a higher communication overhead. For a cardinality constraint,
the number of rounds is a constant but for more general constraints
such as a matroid constraint, the number of rounds is
$\Theta(\log{\Delta})$, where $\Delta$ is the maximum increase in the
objective due to a single element. The maximum increase $\Delta$ can
be much larger than even the number of elements in the entire
dataset, which makes the approach infeasible for massive datasets.

On the negative side, Indyk et al.~\cite{IMMM14} studied coreset
approaches to develop distributed algorithms for finding
representative and yet diverse subsets in large collections. While
succeeding in several measures, they also showed that their approach
provably {\em does not} work for $k$-coverage, which is a special
case of submodular maximization with a cardinality constraint.

\subsection{Our Contribution}
\label{sec:our-contribution}

In this paper, we show that we can achieve both the communication
efficiency of the \GreeDI algorithm and a provable, constant factor,
approximation guarantee. Our algorithm is in fact the \GreeDI
algorithm with a very simple and crucial modification: instead of
partitioning the data arbitrarily on the machines, we \emph{randomly}
partition the dataset. Our analysis may perhaps provide some
theoretical justification for the very good empirical performance of
the \GreeDI algorithm that was established previously in the
extensive experiments of \cite{MKSK13}. It also suggests the approach
can deliver good performance in much wider settings than originally
envisioned.

The \GreeDI algorithm was originally studied in the special case of
monotone submodular maximization under a cardinality constraint.  In
contrast, our analysis holds for any hereditary constraint.
Specifically, we show that our randomized variant of the \GreeDI
algorithm achieves a constant factor approximation for any
hereditary, constrained problem for which the classical (centralized)
\Greedy algorithm achieves a constant factor approximation.  This is
the case not only for cardinality constraints, but also for matroid
constraints, knapsack constraints, and $p$-system constraints
\cite{Jenkyns1976}, which generalize the intersection of $p$ matroid
constraints.  Table \ref{tab:monotone} gives the approximation ratio
$\alpha$ obtained by the greedy algorithm on a variety of problems,
and the corresponding constant factor obtained by our randomized
\GreeDI algorithm.

\begin{table}
\let\oldarraystretch=\arraystretch
\renewcommand{\arraystretch}{1.13}
\centering
\begin{tabular}{|c|c|c|c|}
\hline
Constraint & $\alpha$ & monotone approx. $\left(\frac{\alpha}{2}\right)$& non-monotone approx. $\left(\frac{\alpha}{4 + 2\alpha}\right)$\\ \hline
cardinality & $1 - \frac{1}{e} \approx 0.632$ & $\approx 0.316 $ & $\approx $ 0.12\\ \hline
matroid & $\frac{1}{2}$ & $\frac{1}{4}$ & $\frac{1}{10}$ \\ \hline
knapsack & $\approx 0.35$ & $\approx 0.17$ & $\approx 0.074$ \\ \hline
$p$-system & $\frac{1}{p+1}$& $\frac{1}{2(p+1)}$ & $\frac{1}{2 + 4(p+1)}$ \\ \hline
\end{tabular}
\caption{New approximation results for randomized \GreeDI for 
constrained monotone and non-monotone submodular  maximization\protect\footnotemark}
\label{tab:monotone}
\let\arraystretch=\oldarraystretch
\end{table}

Additionally, we show that if the greedy algorithm satisfies a
slightly stronger technical condition, then our approach gives a
constant factor approximation for constrained \emph{non-monotone}
submodular maximization.  This is indeed the case for all of the
aforementioned specific classes of problems.  The resulting
approximation ratios for non-monotone maximization problems are given
in the last column of Table \ref{tab:monotone}.

\iffalse
Our analysis requires that the number of machines employed by the
distributed algorithm be at least the size of the smallest optimal
solution.  In practice, we can easily ensure this condition
by sequentially simulating the required
number of machines on each available physical machine.

In the case of a knapsack constraint, our analysis requires an extra
a priori bound on the cardinality of a solution.  We note that a
similar technical condition is required in Mirzasoleiman \etal
\cite{KMVV13}.  Moreover, in the case of massive datasets it is often
reasonable (or even necessary) to assume that solutions of interest
will be significantly smaller than the total number of elements.
\fi
\footnotetext{The best-known values of $\alpha$ are taken from
\cite{NWF78} (cardinality), \cite{FNW78-II} (matroid and $p$-system),
and \cite{Wolsey1982} (knapsack).  In the case of a knapsack
constraint, Wolsey in fact employs a slightly modified variant of the
greedy algorithm.  We note that the modified algorithm still
satisfies all technical conditions required for our analysis (in
particular, those for Lemma \ref{lem:rejected-elements}).}

\subsection{Preliminaries}

{\bf MapReduce Model.}
In a MapReduce computation, the data is represented as $\left<
\mathrm{key}, \mathrm{value} \right>$ pairs and it is distributed
across $m$ machines. The computation proceeds in rounds. In a given,
the data is processed in parallel on each of the machines by
\emph{map tasks} that output $\left< \mathrm{key}, \mathrm{value}
\right>$ pairs. These pairs are then shuffled by \emph{reduce tasks};
each reduce task processes all the $\left< \mathrm{key},
\mathrm{value} \right>$ pairs with a given key. The output of the
reduce tasks either becomes the final output of the MapReduce
computation or it serves as the input of the next MapReduce round.

{\bf Submodularity.}
As noted in the introduction, a set function $f : 2^V \to \posR$ is
\emph{submodular} if, for all sets $A,B \subseteq V$, 
\[
	f(A) + f(B) \ge f(A \cup B) + f(A \cap B).
\]
A useful alternative characterization of submodularity can be
formulated in terms of diminishing marginal gains.  Specifically, $f$
is submodular if and only if:
\[
	f(A \cup \set{e}) - f(A) \ge f(B \cup \set{e}) - f(B)
\]
for all $A \subseteq B \subseteq V$ and $e \notin B$.

The \emph{Lov\'asz extension} $f^- : [0,1]^V \to \posR$ of a
submodular function $f$ is given by:
\[
f^-(\vect{x}) = \Ex_{\theta \in \script{U}(0,1)}[f(\{i : x_i \ge
\theta\})].
\]
For any submodular function $f$, the Lov\'asz extension $f^-$ satisfies the following properties: (1) $f^-(\chr{S}) = f(S)$ for all $S \subseteq V$, (2) $f^-$ is convex, and (3) $f^-(c \cdot \vect{x}) \ge c \cdot f^-(\vect{x})$ for any $c \in [0,1]$.  These three properties immediately give the following simple lemma:
\begin{lemma}
Let $S$ be a random set, and suppose that $\Ex[\chr{S}] = c \cdot \vect{p}$ (for $c \in [0,1]$).  Then, $\Ex[f(S)] \ge c \cdot f^-(\vect{p})$.
\label{lem:lovasz}
\end{lemma}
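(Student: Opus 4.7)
The plan is to chain together the three stated properties of the Lovász extension, in order, essentially reading off the inequality from them. The starting point is $\Ex[f(S)]$; I would first rewrite this as $\Ex[f^-(\chr{S})]$ using property (1), which says $f^-$ agrees with $f$ on $\{0,1\}$-vectors.

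Next, since $f^-$ is convex by property (2), I apply Jensen's inequality to push the expectation inside: $\Ex[f^-(\chr{S})] \ge f^-(\Ex[\chr{S}])$. By the hypothesis, $\Ex[\chr{S}] = c \cdot \vect{p}$, so this equals $f^-(c \cdot \vect{p})$. Finally, property (3) gives $f^-(c \cdot \vect{p}) \ge c \cdot f^-(\vect{p})$, completing the chain.

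There is no real obstacle here; the lemma is just the composition of the three listed properties in the natural order. The only subtlety worth flagging is that Jensen's inequality for $f^-$ applies since $f^-$ is convex on $[0,1]^V$ and $\chr{S}$ takes values in this domain almost surely, so the expectation $\Ex[\chr{S}] = c \vect{p}$ also lies in $[0,1]^V$ (which is needed for property (3) to be invoked at the point $\vect{p}$, with scaling factor $c \in [0,1]$).
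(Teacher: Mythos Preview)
Your proposal is correct and follows essentially the same argument as the paper: apply property~(1) to write $\Ex[f(S)] = \Ex[f^-(\chr{S})]$, use convexity (Jensen) to obtain $f^-(\Ex[\chr{S}]) = f^-(c\vect{p})$, and then invoke property~(3). The paper's proof is exactly this chain of (in)equalities, so there is nothing to add.
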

\begin{proof} We have:
\begin{equation*}
\Ex[f(S)] = \Ex[f^-(\chr{S})] \ge f^-(\Ex[\chr{S}]) 
= f^-(c \cdot \vect{p}) \ge c \cdot f^-(\vect{p}),
\end{equation*}
where the first equality follows from property (1), the first inequality from property (2), and the final inequality from property (3).
\end{proof}

{\bf Hereditary Constraints.}
Our results hold quite generally for any problem which can be formulated in terms of a hereditary constraint.  Formally, we consider the problem 
\begin{equation}
\label{eq:problem}
\max \{f(S) : S \subseteq V, S \in \I\},
\end{equation}
where $f : 2^V \to \posR$ is a submodular function and $\I \subseteq
2^V$ is a family of feasible subsets of $V$.  We require that $\I$ be
\emph{hereditary} in the sense that if some set is in $\I$, then so
are all of its subsets.  Examples of common hereditary families
include cardinality constraints ($\I = \{ A \subseteq V : |A| \le k
\}$), matroid constraints ($\I$ corresponds to the collection
independent sets of the matroid), knapsack constraints ($\I = \{ A
\subseteq V : \sum_{i \in A}w_i \le b\}$), as well as arbitrary
combinations of such constraints.  Given some constraint $\I
\subseteq 2^V$, we shall also consider restricted instances in which
we are presented only with a subset $V' \subseteq V$, and must find a
set $S \subseteq V'$ with $S \in \I$ that maximizes $f$.  We say that
an algorithm is an $\alpha$-approximation for maximizing a submodular
function subject to a hereditary constraint $\I$ if, for any
submodular function $f : 2^V \to \posR$ and any subset $V' \subseteq
V$ the algorithm produces a solution $S \subseteq V'$ with $S \in
\I$, satisfying $f(S) \ge \alpha \cdot f(\opt)$, where $\opt \in \I$
is any feasible subset of $V'$.

\section{The Standard Greedy Algorithm}
\begin{algorithm}[t]
\caption{The standard greedy algorithm $\Greedy$}
\label{alg:greedy}
\begin{algorithmic}
  \STATE $S \gets \emptyset$
\LOOP
\STATE Let $C = \{e \in V \setminus S : S \cup \set{e} \in \I \}$
\STATE Let $e = \arg\max_{e \in C}\{f(S \cup \set{e}) - f(S)\}$
\IF{$C = \emptyset$ or $f(S \cup \set{e}) - f(S) < 0$}
\STATE\textbf{return} $S$
\ENDIF
\ENDLOOP
\end{algorithmic}
\end{algorithm}
\begin{algorithm}[t]
\caption{The distributed algorithm $\RandGreeDI$}
\label{alg:randgreedi}
\begin{algorithmic}
\FOR{$e \in V$}
\STATE Assign $e$ to a machine $i$ chosen uniformly at random
\ENDFOR
\STATE Let $V_i$ be the elements assigned to machine $i$
\STATE Run $\Greedy(V_i)$ on each machine $i$ to obtain $S_i$
\STATE Place $S = \bigcup_i S_i$ on machine $1$
\STATE Run $\Alg(S)$ on machine $1$ to obtain $T$
\STATE Let $S' = \arg\max_i\{f(S_i)\}$
\STATE \textbf{return} $\arg\max \{f(T), f(S')\}$
\end{algorithmic}
\end{algorithm}

Before describing our general algorithm, let us recall the standard
greedy algorithm, \Greedy, shown in Algorithm \ref{alg:greedy}.  The
algorithm takes as input $\langle V, \I, f \rangle$, where $V$ is a
set of elements, $\I \subseteq 2^V$ is a hereditary constraint,
represented as a membership oracle for $\I$, and $f: 2^V \to \posR$
is a non-negative submodular function, represented as a value oracle.
Given $\langle V, \I, f\rangle$,  \Greedy iteratively constructs a
solution $S \in \I$ by choosing at each step the element maximizing
the marginal increase of $f$.  For some $A \subseteq V$, we let
$\Greedy(A)$ denote the set $S \in \I$ produced by the greedy
algorithm that considers only elements from $A$.

The greedy algorithm satisfies the following property:

\begin{lemma} \label{lem:rejected-elements}
	Let $A \subseteq V$ and $B \subseteq V$ be two disjoint subsets
	of $V$. Suppose that, for each element $e \in B$, we have
	$\Greedy(A \cup \set{e}) = \Greedy(A)$. Then $\Greedy(A \cup B) =
	\Greedy(A)$.	
\end{lemma}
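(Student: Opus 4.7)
My plan is to prove the lemma by induction on the iteration index of the greedy algorithm, showing that $\Greedy(A \cup B)$ makes the same sequence of choices (and therefore the same termination decision) as $\Greedy(A)$. Write $s_1, \ldots, s_\ell$ for the elements selected by $\Greedy(A)$ in order and set $S_i = \{s_1,\ldots,s_i\}$; the inductive claim is that, after $i$ iterations, the partial solution maintained by $\Greedy(A \cup B)$ is also $S_i$.

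The driver of the induction is the following consequence of the hypothesis applied to a single $e \in B$: under a consistent (deterministic) tie-breaking rule, the equality $\Greedy(A \cup \{e\}) = \Greedy(A)$ forces the two runs to make identical choices iteration-by-iteration. Indeed, at the first point of divergence, greedy on $A \cup \{e\}$ would either select $e$ itself (putting $e$ in the final output, contradicting $e \notin \Greedy(A)$) or select some $s' \in A$ with $s' \ne s_j$, which is impossible since $s_j$ beat $s'$ in the same competition over $A$. Consequently, at iteration $i+1$ the run on $A \cup \{e\}$ holds partial solution $S_i$ and picks $s_{i+1}$ over $e$; this means either $S_i \cup \{e\} \notin \I$, or the marginal gain of $e$ over $S_i$ is at most that of $s_{i+1}$ with tie-breaking favoring $s_{i+1}$.

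For the inductive step at stage $i+1$ of $\Greedy(A \cup B)$, the candidate pool consists of the feasible extensions of $S_i$ drawn from $A \cup B$. Elements of $A$ behave exactly as in $\Greedy(A)$, so $s_{i+1}$ is the best choice among them; the previous paragraph shows that no element of $B$ can displace $s_{i+1}$ either. Hence $\Greedy(A \cup B)$ also selects $s_{i+1}$ at iteration $i+1$. The termination argument is symmetric: at iteration $\ell+1$, the halting of $\Greedy(A \cup \{e\})$ at $S_\ell$ forces every $e \in B$ to be infeasible over $S_\ell$ or to have strictly negative marginal over $S_\ell$; combined with the halting condition for $\Greedy(A)$, no element of $A \cup B$ can be chosen, and $\Greedy(A \cup B) = S_\ell = \Greedy(A)$.

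I expect the main technical subtlety to lie in the tie-breaking remark: promoting the assumed equality of \emph{final sets} to an equality of \emph{selection sequences} really does require a consistent tie-breaking convention across runs, but under the standard convention this is automatic and the remainder of the proof is a straightforward step-by-step simulation — the lemma's content is precisely that the singleton-extension assumption makes \emph{every} element of $B$ lose its greedy competition at \emph{every} iteration.
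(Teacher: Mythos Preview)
Your proof is correct and takes essentially the same approach as the paper: both track the greedy algorithm step by step and argue that any deviation from $\Greedy(A)$ would require selecting some $e \in B$, contradicting the single-element hypothesis for that $e$. The paper presents this as a brief contradiction argument (let $e$ be the first element of $B$ chosen by $\Greedy(A\cup B)$; then $\Greedy(A\cup\{e\})$ would also choose it), whereas you unfold the same idea as an explicit induction with more attention to tie-breaking and termination.
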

\begin{proof}
  Suppose for contradiction that $\Greedy(A \cup B) \neq
	\Greedy(A)$. We first note that, if $\Greedy(A \cup B) \subseteq
	A$, then $\Greedy(A \cup B) = \Greedy(A)$; this follows from the
	fact that each iteration of the Greedy algorithm chooses the
	element with the highest marginal value whose addition to the
	current solution maintains feasibility for $\I$. Therefore, if $\Greedy(A
	\cup B) \neq \Greedy(A)$, the former solution contains an element
	of $B$. Let $e$ be the first element of $B$ that is selected by
	Greedy on the input $A \cup B$. Then Greedy will also select $e$
	on the input $A \cup \set{e}$, which contradicts the fact that
	$\Greedy(A \cup \set{e}) = \Greedy(A)$.
\end{proof}

\section{A Randomized, Distributed Greedy Algorithm for Monotone
Submodular Maximization}
\label{sec:rand-distr-greedy}

\textbf{Algorithm.}
We now describe our general, randomized distributed algorithm,
\RandGreeDI, shown in Algorithm \ref{alg:randgreedi}.  Suppose we
have $m$ machines.  Our algorithm runs in two rounds.  In the first
round, we \emph{randomly} distribute the elements of the ground set
$V$ to the machines, assigning each element to a machine chosen
independently and uniformly at random. On each machine $i$, we
execute $\Greedy(V_i)$ to select a feasible subset $S_i$ of the
elements on that machine.  In the second round, we place all of these
selected subsets on a single machine, and run some algorithm \Alg on
this machine in order to select a final solution $T$.  We return
whichever is better: the final solution $T$ or the best solution
amongst all the $S_i$ from the first phase.

\textbf{Analysis.}
We devote the rest of this section to the analysis of the \RandGreeDI
algorithm.  Fix $\langle V, \I, f\rangle$, where $\I \subseteq 2^V$
is a hereditary constraint, and $f : 2^V\to \posR$ is any
non-negative, monotone submodular function.  Suppose that \Greedy is
an $\alpha$-approximation and \Alg is a $\beta$-approximation for the
associated constrained monotone submodular maximization problem of
the form \eqref{eq:problem}.   Let $n = |V|$ and suppose that
$\opt = \arg\max_{A \in \I}f(A)$ is a feasible set maximizing $f$.

%If $S$ and $T$ are two random subsets of $V$, we use $S \sim T$ to
%denote the fact that $S$ and $T$ are identically distributed.

Let $\mathcal{V}(1/m)$ denote the distribution over random subsets of $V$ where each element is included
independently with probability $1/m$.  Let $\vect{p} \in [0, 1]^n$ be
the following vector. For each element $e \in V$, we have
\begin{equation*}
	p_e = \begin{cases}
    \underset{A \sim \mathcal{V}(1/m)}{\Pr}[e \in \Greedy(A \cup \set{e})] &
    \text{if $e \in \opt$}\\
		0 & \text{otherwise}
	\end{cases}
\end{equation*}
Our main theorem follows from the next two lemmas, which characterize
the quality of the best solution from the first round and that of the
solution from the second round, respectively. Recall that $f^-$ is
the Lov\'asz extension of $f$.

\begin{lemma} \label{lem:single-machine-greedy}
	For each machine $i$, $\Ex[f(S_i)] \geq \alpha \cdot
	f^-\left(\vect{1}_{\opt} - \vect{p} \right).$
\end{lemma}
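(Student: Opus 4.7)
The idea is to extract, from $\opt$, a random subset $R_i$ of elements that would be "rejected" by greedy on machine $i$, and to show two things: (a) the greedy solution $S_i$ on $V_i$ is at least an $\alpha$-fraction of $f(R_i)$, and (b) the indicator $\chr{R_i}$ has expectation exactly $\vect{1}_{\opt} - \vect{p}$. Combining these with Lemma~\ref{lem:lovasz} immediately yields the bound.

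\textbf{Step 1: define the rejected set.} Let $R_i = \{e \in \opt : e \notin \Greedy(V_i \cup \{e\})\}$. Since each element is assigned to machine $i$ independently with probability $1/m$, the set $V_i$ is distributed as $\mathcal{V}(1/m)$. Hence for every $e \in \opt$, $\Pr[e \in R_i] = 1 - p_e$, and for $e \notin \opt$ we have $\Pr[e \in R_i] = 0 = 1 - (1-p_e)$ since $p_e$ is defined to be $0$ off $\opt$. Thus $\Ex[\chr{R_i}] = \vect{1}_{\opt} - \vect{p}$.

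\textbf{Step 2: greedy is unaffected by adding $R_i$.} I want to apply Lemma~\ref{lem:rejected-elements} with $A = V_i$ and $B = R_i \setminus V_i$. These are disjoint by construction, and for every $e \in B$ we have $e \notin V_i$ and $e \notin \Greedy(V_i \cup \{e\})$; the first part of the proof of Lemma~\ref{lem:rejected-elements} shows that in this case $\Greedy(V_i \cup \{e\}) = \Greedy(V_i)$. The lemma then gives $\Greedy(V_i \cup R_i) = \Greedy(V_i \cup B) = \Greedy(V_i) = S_i$. Since $R_i \subseteq \opt \in \I$ and $\I$ is hereditary, $R_i \in \I$, so $R_i$ is a feasible subset of the ground set $V_i \cup R_i$ on which greedy was run. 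Because $\Greedy$ is an $\alpha$-approximation for the hereditary problem on any subset of $V$, we conclude
\[
f(S_i) \;=\; f(\Greedy(V_i \cup R_i)) \;\ge\; \alpha \cdot f(R_i).
\]

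\textbf{Step 3: put it together.} Taking expectations over the random partition and applying Lemma~\ref{lem:lovasz} with $c = 1$ and $\vect{p}$ replaced by $\vect{1}_{\opt} - \vect{p}$ (justified by Step 1),
\[
\Ex[f(S_i)] \;\ge\; \alpha \cdot \Ex[f(R_i)] \;\ge\; \alpha \cdot f^-(\vect{1}_{\opt} - \vect{p}),
\]
which is the desired inequality.

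\textbf{Main obstacle.} The only subtle point is the equivalence used in Step 2: that $e \notin \Greedy(A \cup \{e\})$ implies $\Greedy(A \cup \{e\}) = \Greedy(A)$ whenever $e \notin A$. This is precisely the content of the opening argument in the proof of Lemma~\ref{lem:rejected-elements} (greedy would make the same choices on $A$ and on $A \cup \{e\}$ whenever $e$ is never the maximizer), so once it is invoked the rest is mechanical. The match between $\Ex[\chr{R_i}]$ and $\vect{1}_{\opt} - \vect{p}$ is then an immediate consequence of the definition of $p_e$, modulo the observation that $p_e = 0$ for $e \notin \opt$ by convention.
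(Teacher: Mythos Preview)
Your proof is correct and follows essentially the same approach as the paper: the set you call $R_i$ is exactly the paper's $O_i$, and the three steps---apply Lemma~\ref{lem:rejected-elements} to conclude $\Greedy(V_i)=\Greedy(V_i\cup R_i)$, use the $\alpha$-approximation guarantee against the feasible set $R_i$, then compute $\Ex[\chr{R_i}]=\vect{1}_{\opt}-\vect{p}$ and invoke Lemma~\ref{lem:lovasz}---match the paper line for line. Your explicit justification that $e\notin\Greedy(V_i\cup\{e\})$ implies $\Greedy(V_i\cup\{e\})=\Greedy(V_i)$ (needed to verify the hypothesis of Lemma~\ref{lem:rejected-elements}) is a point the paper leaves implicit, so if anything you are slightly more careful.
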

\begin{proof}
	Consider machine $i$. Let $V_i$ denote the set of elements assigned
	to machine $i$ in the first round. Let $O_i = \set{e \in \opt
	\colon e \notin \Greedy(V_i \cup \set{e})}$. We make the following
	key observations.

	We apply Lemma~\ref{lem:rejected-elements} with $A = V_i$ and $B =
	O_i \setminus V_i$ to obtain that $\Greedy(V_i) = \Greedy(V_i \cup
	O_i) = S_i$. Since $\opt \in \I$ and $\I$ is hereditary, we must
	have $O_i \in \I$ as well. Since $\Greedy$ is an
	$\alpha$-approximation, it follows that
	\[
		f(S_i) \ge \alpha \cdot f(O_i).
	\]
	Since the distribution of $V_i$ is the same as $\mathcal{V}(1/m)$, for each element
	$e \in \opt$, we have
	\begin{align*}
		\Pr[e \in O_i] &= 1 - \Pr[e \notin O_i] = 1 - p_e\\
		\Ex[\chr{O_i}] &= \chr{\opt} - \vect{p}.
	\end{align*}
	By combining these observations with Lemma~\ref{lem:lovasz}, we
	obtain
	\[
		\Ex[f(S_i)] \ge \alpha \cdot \Ex[f(O_i)] \ge \alpha \cdot
		f^-\left(\vect{1}_{\opt} - \vect{p}\right).
		%\qedhere
	\]
\end{proof}

\begin{lemma} \label{lem:union-greedy}
	$\Ex[f(\Alg(S))] \geq \beta \cdot f^-(\vect{p}).$
\end{lemma}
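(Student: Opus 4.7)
The plan is to show that the union $S = \bigcup_i S_i$ contains each element $e \in \opt$ with probability exactly $p_e$, so that $\chr{S \cap \opt}$ has expectation $\vect{p}$. The bound then follows by combining the $\beta$-approximation guarantee of \Alg with the hereditary property of $\I$ and Lemma \ref{lem:lovasz}.

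First I would compute $\Pr[e \in S]$ for each $e \in \opt$. Since the random partition sends $e$ to a single machine, the only way for $e$ to appear in $S$ is for the greedy algorithm on that machine to select it. Condition on the event that $e$ is assigned to a particular machine $j$: the remaining elements of $V_j$ are then a random subset of $V \setminus \{e\}$ in which each element appears independently with probability $1/m$. Consequently $V_j$ has the same distribution as $A \cup \{e\}$ with $A \sim \mathcal{V}(1/m)$, and so
\[
\Pr[e \in S_j \mid e \text{ is on machine } j] \;=\; \Pr_{A \sim \mathcal{V}(1/m)}\bigl[e \in \Greedy(A \cup \set{e})\bigr] \;=\; p_e.
\]
By symmetry the conditional probability does not depend on $j$, giving $\Pr[e \in S] = p_e$ and hence $\Ex[\chr{S \cap \opt}] = \vect{p}$.

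Next I would exploit heredity and the approximation guarantee of \Alg. Because $\opt \in \I$ and $\I$ is hereditary, $\opt \cap S \in \I$, and since $\opt \cap S \subseteq S$, the fact that \Alg is a $\beta$-approximation on the ground set $S$ yields the deterministic inequality $f(\Alg(S)) \geq \beta \cdot f(\opt \cap S)$ for every realization of the random partition. Taking expectations and then applying Lemma \ref{lem:lovasz} with $c = 1$ to the random set $\opt \cap S$ gives
\[
\Ex[f(\Alg(S))] \;\geq\; \beta \cdot \Ex[f(\opt \cap S)] \;\geq\; \beta \cdot f^-(\vect{p}),
\]
which is the desired bound.

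The main obstacle is the first step: correctly identifying the conditional distribution of $V_j$ given $e \in V_j$ with $\mathcal{V}(1/m)$ on the remaining ground set (up to the forced inclusion of $e$), so that the quantity $p_e$ defined in the excerpt literally equals $\Pr[e \in S]$. Once this identification is in hand, everything else — the heredity argument, the approximation guarantee of \Alg, and the invocation of Lemma \ref{lem:lovasz} — is routine.
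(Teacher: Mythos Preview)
Your proposal is correct and follows essentially the same approach as the paper: both argue that conditioning on $e$ landing on machine $j$ makes $V_j$ distributed as $A \cup \{e\}$ with $A \sim \mathcal{V}(1/m)$, hence $\Pr[e \in S] = p_e$, and then combine the hereditary property, the $\beta$-approximation of \Alg, and Lemma~\ref{lem:lovasz}. The only cosmetic difference is that the paper routes the conditional-probability calculation through $\Pr_{A \sim \mathcal{V}(1/m)}[e \in \Greedy(A) \mid e \in A]$ before passing to $B \cup \{e\}$, whereas you identify the conditional distribution of $V_j$ with $A \cup \{e\}$ directly.
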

\begin{proof}
	Recall that $S = \bigcup_i \Greedy(V_i)$. Since $\opt \in \I$ and
	$\I$ is hereditary, $S \cap \opt \in \I$. Since $\Alg$ is a
	$\beta$-approximation, we have
	\begin{equation}
		f(\Alg(S)) \geq \beta \cdotp f(S \cap \opt). \label{eq:mg1}
	\end{equation}
	Consider an element $e \in \opt$. For each machine $i$, we have
	\begin{align*}
		\Pr[e \in S \sep \text{$e$ is assigned to machine $i$}] &= \Pr[e
		\in \Greedy(V_i) \sep e \in V_i]\\
		&= \Pr_{A \sim \mathcal{V}(1/m)}[e \in \Greedy(A) \sep e \in A]\\
		&= \Pr_{B \sim \mathcal{V}(1/m)}[e \in \Greedy(B \cup \set{e} )]\\
		&= p_e.
	\end{align*}
  The first equality follows from the fact that $e$ is included in
  $S$ if and only if it is included in $\Greedy(V_i)$. The second
  equality follows from the fact that the distribution of $V_i$ is
  identical to $\mathcal{V}(1/m)$. The third equality follows from
  the fact that the distribution of $A \sim \mathcal{V}(1/m)$
  conditioned on $e\in A$ is identical to the distribution of $B \cup
  \{e\}$ where $B\sim \mathcal{V}(1/m)$. Therefore
	\begin{align}
		\Pr[e \in S \cap \opt] &= p_e \notag\\
		\Ex[\chr{S \cap \opt}] &= \vect{p}. \label{eq:mg2}
	\end{align}
	By combining (\ref{eq:mg1}), (\ref{eq:mg2}), and
	Lemma~\ref{lem:lovasz}, we obtain
	\[
		\Ex[f(\Alg(S))] \geq \beta \cdotp \Ex[f(S \cap \opt)] \geq \beta
		\cdot f^-(\vect{p}).
	\]
\end{proof}

Combining Lemma~\ref{lem:union-greedy} and
Lemma~\ref{lem:single-machine-greedy} gives us our main theorem.

\begin{theorem}
\label{thm:main}
	Suppose that \Greedy is an $\alpha$-approximation algorithm and
	\Alg is a $\beta$-approximation algorithm for maximizing a
	monotone submodular function subject to a hereditary constraint
	$\I$.  Then \RandGreeDI is (in expectation) an
	$\frac{\alpha\beta}{\alpha + \beta}$-approximation algorithm for
	the same problem.
\end{theorem}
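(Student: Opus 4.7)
The algorithm returns a set whose value dominates both $\max_i f(S_i)$ and $f(\Alg(S))$, so in expectation it is at least any convex combination of $\E[\max_i f(S_i)] \ge \E[f(S_1)]$ and $\E[f(\Alg(S))]$. Lemma~\ref{lem:single-machine-greedy} and Lemma~\ref{lem:union-greedy} already give
\[
\E[f(S')] \;\ge\; \alpha \cdot f^-(\vect{1}_{\opt} - \vect{p}), \qquad \E[f(\Alg(S))] \;\ge\; \beta \cdot f^-(\vect{p}),
\]
so for any $\lambda \in [0,1]$,
\[
\E[\text{output}] \;\ge\; \lambda \alpha \cdot f^-(\vect{1}_{\opt} - \vect{p}) + (1-\lambda)\beta \cdot f^-(\vect{p}).
\]
I will choose $\lambda \alpha = (1-\lambda)\beta$, i.e.\ $\lambda = \beta/(\alpha+\beta)$, so that both coefficients equal $\alpha\beta/(\alpha+\beta)$ and the right-hand side becomes $\frac{\alpha\beta}{\alpha+\beta}\bigl(f^-(\vect{1}_{\opt} - \vect{p}) + f^-(\vect{p})\bigr)$.

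It then remains to show
\[
f^-(\vect{1}_{\opt} - \vect{p}) + f^-(\vect{p}) \;\ge\; f^-(\vect{1}_{\opt}) \;=\; f(\opt).
\]
This is a subadditivity statement for $f^-$ on $[0,1]^V$, and is the one step that is not completely routine from the cited lemmas. I will derive it directly from properties (2) and (3) of the Lov\'asz extension: by convexity,
\[
f^-\!\left(\tfrac{1}{2}(\vect{1}_{\opt} - \vect{p}) + \tfrac{1}{2}\vect{p}\right) \;\le\; \tfrac{1}{2}\bigl(f^-(\vect{1}_{\opt} - \vect{p}) + f^-(\vect{p})\bigr),
\]
while by property (3) applied with $c = 1/2$ and $\vect{x} = \vect{1}_{\opt}$,
\[
f^-\!\left(\tfrac{1}{2}\vect{1}_{\opt}\right) \;\ge\; \tfrac{1}{2} f^-(\vect{1}_{\opt}).
\]
Multiplying by $2$ and combining these inequalities yields the desired subadditivity, and the equality $f^-(\vect{1}_{\opt}) = f(\opt)$ comes from property (1). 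Note that the construction of $\vect{p}$ guarantees $\vect{p}$ is supported on $\opt$ and $p_e \in [0,1]$, so $\vect{1}_{\opt} - \vect{p}$ lies in $[0,1]^V$ and both arguments are legal.

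Putting it together, $\E[\text{output}] \ge \frac{\alpha\beta}{\alpha+\beta} f(\opt)$, which is the claim. The only non-mechanical point is the subadditivity step above; everything else is assembling the two lemmas and picking the right convex combination.
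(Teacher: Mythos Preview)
Your proof is correct and follows essentially the same route as the paper's own proof: both combine Lemmas~\ref{lem:single-machine-greedy} and~\ref{lem:union-greedy} with weights $\beta$ and $\alpha$ (equivalently, your convex combination with $\lambda=\beta/(\alpha+\beta)$), and then invoke the subadditivity $f^-(\vect{1}_{\opt}-\vect{p})+f^-(\vect{p})\ge f^-(\vect{1}_{\opt})$ via convexity together with the scaling property $f^-(c\vect{x})\ge c\,f^-(\vect{x})$. You spell out the subadditivity step in slightly more detail than the paper does, but the argument is the same.
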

\begin{proof}
	Let $S_i = \Greedy(V_i)$, $S = \bigcup_iS_i$ be the set of
	elements on the last machine, and $T = \Alg(S)$ be the solution
	produced on the last machine.  Then, the output $D$ produced by
	\RandGreeDI satisfies $f(D) \ge \max_i(f(S_i))$ and $f(D) \ge
	f(T)$. Thus, from Lemmas \ref{lem:single-machine-greedy} and
	\ref{lem:union-greedy} we have:
	\begin{align}
		\Ex[f(D)] &\ge \alpha \cdot f^-(\chr{\opt} - \vect{p})
		\label{eq:D1}\\
		\Ex[f(D)] &\ge \beta \cdot f^-(\vect{p}).
		\label{eq:D2}
	\end{align}
	By combining \eqref{eq:D1} and \eqref{eq:D2}, we obtain
	\begin{align*}
		\left(\beta + \alpha\right)\Ex[f(D)] &\ge \alpha\beta
		\big(f^-(\vect{p}) + f^-(\chr{\opt} - \vect{p}) \big) \\
		& \ge \alpha\beta \cdot f^-(\chr{\opt}) \\
		& = \alpha\beta \cdot f(\opt).
	\end{align*}
	In the second inequality, we have used the fact that $f^-$ is
	convex and $f^-(c \cdotp \vect{x}) \geq c f^-(\vect{x})$ for any
	constant $c \in [0, 1]$.
\end{proof}

If we use the standard greedy algorithm for $\Alg$, we obtain the
following simplified corollary of Theorem \ref{thm:main}.

\begin{corollary}
\label{cor:randomized-greedi-approx}
	Suppose that \Greedy is an $\alpha$-approximation algorithm for
	maximizing a monotone submodular function, and use \Greedy as
	the algorithm \Alg in \RandGreeDI.  Then, the resulting algorithm
	is (in expectation) an $\frac{\alpha}{2}$-approximation
	algorithm for the same problem.
\end{corollary}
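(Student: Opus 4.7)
The plan is to derive the corollary as an immediate specialization of Theorem~\ref{thm:main}. Since the corollary's hypotheses place us in the setting of the theorem with $\Alg = \Greedy$, I would simply invoke the theorem with the approximation ratio $\beta$ of $\Alg$ set equal to $\alpha$.

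Specifically, I would observe that under the hypothesis, $\Greedy$ is an $\alpha$-approximation for maximizing any monotone submodular function subject to the hereditary constraint $\I$, both when run on each machine $i$ to produce $S_i$ and when used as $\Alg$ on the aggregated set $S$ on machine $1$. Theorem~\ref{thm:main} then applies with $\beta = \alpha$, giving the in-expectation approximation ratio
\[
\frac{\alpha\beta}{\alpha+\beta} \;=\; \frac{\alpha^2}{2\alpha} \;=\; \frac{\alpha}{2}.
\]

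There is essentially no obstacle here: the corollary is a one-line consequence of Theorem~\ref{thm:main} and requires only the arithmetic simplification above together with the observation that the hereditary structure of $\I$ ensures $\Greedy$'s guarantee carries over to the restricted ground set $S$ used in the second round (so that $\beta = \alpha$ is indeed valid for $\Alg$ on that input). No additional properties of the Lovász extension or of $\Greedy$ beyond those already used in the proof of Theorem~\ref{thm:main} are needed.
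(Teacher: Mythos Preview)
Your proposal is correct and matches the paper's approach exactly: the paper states this result as an immediate corollary of Theorem~\ref{thm:main} (with no separate proof given), obtained by taking $\Alg = \Greedy$ so that $\beta = \alpha$ and $\frac{\alpha\beta}{\alpha+\beta} = \frac{\alpha}{2}$. Your remark that the hereditary nature of $\I$ ensures $\Greedy$'s guarantee applies on the restricted ground set $S$ is the only point worth noting, and it is already implicit in the paper's definition of an $\alpha$-approximation (which is stated to hold for any $V' \subseteq V$).
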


\section{Non-Monotone Submodular Functions}

We consider the problem of maximizing a \emph{non-monotone}
submodular function subject to a hereditary constraint.  Our approach
is a slight modification of the randomized, distributed greedy
algorithm described in Section \ref{sec:rand-distr-greedy}, and it
builds on the work of \cite{GuptaRST10}.  Again, we show how to
combine the standard \Greedy algorithm, together with any algorithm
\Alg for the non-monotone case in order to obtain a randomized,
distributed algorithm for the non-monotone submodular maximization.

\textbf{Algorithm.} Our modified  algorithm, \NMRandGreeDI, works as
follows.  As in the monotone case, in the first round we distribute
the elements of $V$ uniformly at random amongst the $m$ machines.
Then, we run the standard greedy algorithm \emph{twice} to obtain two
disjoint solutions $S_i^1$ and $S_i^2$ on each machine.
Specifically, each machine first runs $\Greedy$ on $V_i$ to obtain a
solution $S_i^1$, then runs \Greedy on $V_i \setminus S_i^1$ to
obtain a disjoint solution $S_i^2$.  In the second round, both of
these solutions are sent to a single machine, which runs \Alg on $S =
\bigcup_i(S_i^1 \cup S_i^2)$ to produce a solution $T$.  The best
solution amongst $T$ and all of the solutions $S_i^1$ and $S_i^2$ is
then returned.

\textbf{Analysis.}
We devote the rest of this section to the analysis of the algorithm.
In the following, we assume that we are working with an instance
$\langle V, \I, f \rangle$ of non-negative, non-monotone submodular
maximization for which the \Greedy algorithm has the following
property:
\begin{equation}
	\text{For all $S \in \I$:} 
	\qquad f(\Greedy(V)) \ge \alpha \cdot f(\Greedy(V) \cup S)
	\tag{\ensuremath{\mathrm{GP}}}
	\label{eq:greedy-strong-property}
\end{equation}
The standard analysis of the \Greedy algorithm shows that
(\ref{eq:greedy-strong-property}) is satisfied with constant $\alpha$
for hereditary constraints such as matroids, knapsacks, and
$p$-systems (see Table~\ref{tab:monotone}).

The analysis is similar to the approach from the previous section. We
define $\script{V}(1/m)$ as before. We modify the definition of the
vector $\vect{p}$ as follows. For each element $e \in V$, we have
\begin{equation*}
	p_e =
  \begin{cases}
    \underset{A\sim \mathcal{V}(1/m)}{\Pr}\Big[e \in \Greedy(A \cup
    \{e\}) \textbf{ or } & \\
    \qquad\qquad\; e \in \Greedy ((A \cup \{e\})\setminus \Greedy(A\cup
    \{e\})) \Big] & \text{if $e \in \opt$}\\
		0 & \text{otherwise}
	\end{cases}
\end{equation*}

We now derive analogues of Lemmas~\ref{lem:single-machine-greedy} and
\ref{lem:union-greedy}.
\begin{lemma} \label{lem:non-monotone-single-machine}
	Suppose that \Greedy satisfies \eqref{eq:greedy-strong-property}.
	For each machine $i$,
	\[
		\Ex\left[f(S^1_i) + f(S^2_i) \right] \ge
		\alpha \cdotp f^-(\vect{1}_{\opt} - \vect{p}),
	\]
	and therefore
	\[
		\Ex\left[\max\set{f(S^1_i), f(S^2_i)}\right] \ge
		\frac{\alpha}{2} \cdotp f^-(\vect{1}_{\opt} - \vect{p}).
	\]
\end{lemma}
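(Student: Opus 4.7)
The plan is to adapt the proof of Lemma~\ref{lem:single-machine-greedy} by constructing a single set $O_i \subseteq \opt$ that is invariant under \emph{both} greedy runs on machine~$i$. Mirroring how the two clauses in the new definition of $p_e$ combine, I would define
\[
	O_i = \set{e \in \opt : e \notin \Greedy(V_i \cup \set{e}) \text{ and } e \notin \Greedy\bigl((V_i \cup \set{e}) \setminus \Greedy(V_i \cup \set{e})\bigr)}.
\]
Since $V_i \sim \mathcal{V}(1/m)$, we have $\Pr[e \in O_i] = 1 - p_e$ for $e \in \opt$ and hence $\Ex[\chr{O_i}] = \chr{\opt} - \vect{p}$; by heredity, $O_i \in \I$.

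The core step is the pair of invariances $\Greedy(V_i \cup O_i) = S^1_i$ and $\Greedy((V_i \cup O_i) \setminus S^1_i) = S^2_i$. The first follows by applying Lemma~\ref{lem:rejected-elements} with $A = V_i$ and $B = O_i \setminus V_i$: for each such $e$, the first clause in the definition of $O_i$ forces $\Greedy(V_i \cup \set{e}) \subseteq V_i$ and hence $\Greedy(V_i \cup \set{e}) = \Greedy(V_i) = S^1_i$ (the same observation used inside the proof of Lemma~\ref{lem:rejected-elements}). This also implies $O_i \cap S^1_i = \emptyset$, which in turn lets me rewrite $(V_i \cup \set{e}) \setminus \Greedy(V_i \cup \set{e}) = (V_i \setminus S^1_i) \cup \set{e}$ for each $e \in O_i \setminus V_i$. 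The second clause of the $O_i$ definition then says exactly that $e \notin \Greedy((V_i \setminus S^1_i) \cup \set{e})$, which is the hypothesis needed to invoke Lemma~\ref{lem:rejected-elements} a second time, with $A = V_i \setminus S^1_i$ and $B = O_i \setminus V_i$, yielding the second invariance.

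With these in hand, I would apply property \eqref{eq:greedy-strong-property} to each run (legitimate because $O_i \in \I$ is contained in each ground set) to get $f(S^j_i) \ge \alpha \cdot f(S^j_i \cup O_i)$ for $j = 1,2$. Adding and using submodularity together with $S^1_i \cap S^2_i = \emptyset$---so that $(S^1_i \cup O_i) \cap (S^2_i \cup O_i) = O_i$---gives
\[
	f(S^1_i) + f(S^2_i) \ge \alpha\bigl(f(S^1_i \cup S^2_i \cup O_i) + f(O_i)\bigr) \ge \alpha \cdot f(O_i),
\]
by non-negativity of $f$. Taking expectations and invoking Lemma~\ref{lem:lovasz} with $\Ex[\chr{O_i}] = \chr{\opt} - \vect{p}$ delivers the first inequality; the second is immediate from $\max\set{a,b} \ge (a+b)/2$.

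I expect the main obstacle to be the set-algebra bookkeeping for the second invariance---in particular, matching the ground set $(V_i \cup \set{e}) \setminus \Greedy(V_i \cup \set{e})$ that appears inside the definition of $p_e$ with the actual ground set $(V_i \cup O_i) \setminus S^1_i$ of the second greedy run after restoring $e$. The two cases $e \in O_i \cap V_i$ (where $V_i \cup \set{e} = V_i$ and $e \notin S^1_i \cup S^2_i$ holds by definition of $O_i$) and $e \in O_i \setminus V_i$ (where Lemma~\ref{lem:rejected-elements} is invoked) have to be verified separately, but both funnel through the same conclusion.
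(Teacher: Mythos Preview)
Your proposal is correct and follows essentially the same approach as the paper: the same definition of $O_i$, the same two invariances via Lemma~\ref{lem:rejected-elements}, the same application of \eqref{eq:greedy-strong-property} followed by submodularity with $S^1_i \cap S^2_i = \emptyset$, and the same conclusion via Lemma~\ref{lem:lovasz}. In fact your set-algebra bookkeeping for the second invariance (verifying $(V_i \cup \set{e}) \setminus \Greedy(V_i \cup \set{e}) = (V_i \setminus S^1_i) \cup \set{e}$ and $O_i \cap S^1_i = \emptyset$) is more explicit than what the paper writes, which simply asserts that \eqref{eq:rejected2} follows from Lemma~\ref{lem:rejected-elements}.
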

\begin{proof}
	Consider machine $i$ and let $V_i$ be the set of elements assigned
	to machine $i$ in the first round. Let
	\begin{align*}
		O_i = \{e \in \opt \colon e &\notin \Greedy(V_i \cup \set{e})
		\textbf{ and } \\
		e &\notin \Greedy ((V_i \cup \set{e} ) \setminus
		\Greedy(V_i \cup \set{e})) \}
	\end{align*}
	Note that, since $\opt \in \I$ and $\I$ is hereditary, we have $O_i
	\in \I$.

	It follows from Lemma~\ref{lem:rejected-elements} that
	\begin{align}
		S^1_i &= \Greedy(V_i) = \Greedy(V_i \cup O_i)
		\label{eq:rejected1},\\
		S^2_i &= \Greedy(V_i \setminus S^1_i) = \Greedy((V_i \setminus
		S^1_i) \cup O_i). \label{eq:rejected2}
	\end{align}
	By combining the equations above with the greedy property
	(\ref{eq:greedy-strong-property}), we obtain
	\begin{align}
		f(S^1_i) &\overset{(\ref{eq:rejected1})}{=} f(\Greedy(V_i \cup
		O_i)) \notag\\
		& \overset{(\ref{eq:greedy-strong-property})}{\geq} \alpha \cdotp
		f(\Greedy(V_i \cup O_i) \cup O_i) \notag\\
		&\overset{(\ref{eq:rejected1})}{=} \alpha \cdotp f(S^1_i \cup
		O_i) \label{eq:g1},\\
		f(S^2_i) &\overset{(\ref{eq:rejected2})}{=} f(\Greedy((V_i \setminus
		S^1_i) \cup O_i)) \notag\\
		&\overset{(\ref{eq:greedy-strong-property})}{\geq} \alpha \cdotp
		f(\Greedy((V_i \setminus S^1_i) \cup O_i) \cup O_i) \notag\\
		&\overset{(\ref{eq:rejected2})}{=} \alpha \cdotp f(S^2_i \cup
		O_i). \label{eq:g2}
	\end{align}
	Now we observe that
	\begin{align}
		f(S^1_i \cup O_i) + f(S^2_i \cup O_i) &\geq f( (S^1_i \cup O_i)
		\cap (S^2_i \cup O_i)) + f(S^1_i \cup S^2_i \cup O_i) &
		\mbox{($f$ is submodular)} \notag\\
		&= f(O_i) + f(S^1_i \cup S^2_i \cup O_i) &
		\mbox{($S^1_i \cap S^2_i = \emptyset$)} \notag\\
		&\geq f(O_i). & \mbox{($f$ is non-negative)} \label{eq:g3}
	\end{align}
	By combining (\ref{eq:g1}), (\ref{eq:g2}), and (\ref{eq:g3}), we
	obtain
	\begin{equation}
		f(S^1_i) + f(S^2_i) \geq \alpha \cdotp f(O_i). \label{eq:g4}
	\end{equation}
	Since the distribution of $V_i$ is the same as $\mathcal{V}(1/m)$, for each element
	$e \in \opt$, we have
	\begin{align}
		\Pr[e \in O_i] &= 1 - \Pr[e \notin O_i] = 1 - p_e, \notag\\
		\Ex[\chr{O_i}] &= \chr{\opt} - \vect{p}.
		\label{eq:g5}
	\end{align}
	By combining (\ref{eq:g4}), (\ref{eq:g5}), and
	Lemma~\ref{lem:lovasz}, we obtain
	\begin{align*}
		\Ex[f(S^1_i) + f(S^2_i)] &\geq \alpha \cdotp \Ex[f(O_i)]
		&\mbox{(By (\ref{eq:g4}))}\\
		&\geq \alpha \cdotp f^-(\chr{\opt} - \vect{p}). &\mbox{(By
		(\ref{eq:g5}) and Lemma~\ref{lem:lovasz})}
	\end{align*}
\end{proof}
\begin{lemma} \label{lem:non-monotone-union-greedy}
	$\Ex[f(\Alg(S))] \geq \beta \cdot f^-(\vect{p}).$
\end{lemma}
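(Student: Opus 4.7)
The plan is to mirror the proof of Lemma~\ref{lem:union-greedy} from the monotone case, with the only real change being the analysis of $\Pr[e \in S]$ under the new two-pass greedy. The structure has three ingredients: (i) a feasibility argument so we can compare $\Alg(S)$ against $f(S \cap \opt)$ via the $\beta$-approximation guarantee, (ii) a probabilistic computation showing $\Ex[\chr{S \cap \opt}] = \vect{p}$ for the new definition of $\vect{p}$, and (iii) an application of Lemma~\ref{lem:lovasz} to convert the expected value into $f^-(\vect{p})$.

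For (i), note that $S = \bigcup_i (S^1_i \cup S^2_i) \supseteq S \cap \opt$, and since $\opt \in \I$ and $\I$ is hereditary, $S \cap \opt \in \I$. Because $\Alg$ is a $\beta$-approximation for the hereditary problem restricted to the ground set $S$, we get
\begin{equation*}
f(\Alg(S)) \;\ge\; \beta \cdot f(S \cap \opt).
\end{equation*}

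For (ii), fix $e \in \opt$ and condition on $e$ being assigned to machine $i$. The remaining elements on machine $i$ are an independent sample $A \sim \mathcal{V}(1/m)$, so $V_i$ (conditional on $e \in V_i$) is distributed as $A \cup \{e\}$. The element $e$ lands in $S$ iff it survives into $S^1_i = \Greedy(V_i)$ or into $S^2_i = \Greedy(V_i \setminus S^1_i)$, which is precisely the event defining $p_e$. Hence
\begin{equation*}
\Pr[e \in S \mid e \text{ assigned to machine } i] \;=\; p_e,
\end{equation*}
and averaging over the uniformly chosen machine, $\Pr[e \in S \cap \opt] = p_e$. Therefore $\Ex[\chr{S \cap \opt}] = \vect{p}$.

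Combining (i)--(ii) with Lemma~\ref{lem:lovasz} (applied with $c = 1$ to the random set $S \cap \opt$) yields
\begin{equation*}
\Ex[f(\Alg(S))] \;\ge\; \beta \cdot \Ex[f(S \cap \opt)] \;\ge\; \beta \cdot f^-(\vect{p}),
\end{equation*}
as claimed. There is no genuine obstacle here; the only subtlety worth double-checking is that the definition of $p_e$ exactly matches the event ``$e$ is retained by the two-pass greedy on its machine,'' and that $\Alg$'s approximation guarantee is allowed to be used with ground set $S$ rather than $V$ (which is exactly what the definition of an $\alpha$-approximation for hereditary constraints in the Preliminaries permits).
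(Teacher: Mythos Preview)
Your proposal is correct and follows essentially the same approach as the paper: both argue that $S \cap \opt \in \I$ so that the $\beta$-approximation guarantee of \Alg yields $f(\Alg(S)) \ge \beta \cdot f(S \cap \opt)$, then compute $\Pr[e \in S \mid e \text{ on machine } i] = p_e$ by identifying the conditional distribution of $V_i$ with $B \cup \{e\}$ for $B \sim \mathcal{V}(1/m)$ and matching the two-pass greedy event to the definition of $p_e$, and finally apply Lemma~\ref{lem:lovasz}. The only cosmetic difference is that the paper spells out the chain of equalities for the conditional probability a bit more explicitly.
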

\begin{proof}
	Recall that $S^1_i = \Greedy(V_i)$, $S^2_i = \Greedy(V_i
	\setminus S^1_i)$, and $S = \bigcup_i (S^1_i \cup S^2_i)$. Since
	$\opt \in \I$ and $\I$ is hereditary, $S \cap \opt \in \I$. Since
	$\Alg$ is a $\beta$-approximation, we have
	\begin{equation}
		f(\Alg(S)) \geq \beta \cdotp f(S \cap \opt). \label{eq:nmg1}
	\end{equation}
	Consider an element $e \in \opt$. For each machine $i$, we have
	\begin{align*}
		& \Pr [e \in S \sep \text{$e$ is assigned to machine $i$}] \\
		& \quad = \Pr[e \in \Greedy(V_i) \textbf{ or } e \in \Greedy(V_i \setminus
		\Greedy(V_i)) \sep e \in V_i]\\
    & \quad =\Pr_{A\sim \mathcal{V}(1/m)}[e \in \Greedy(A) \textbf{
    or } e \in \Greedy(A \setminus \Greedy(A)) \sep e \in A]\\
    & \quad = \Pr_{B \sim \mathcal{V}(1/m)}[e \in
    \Greedy(B\cup\set{e}) \textbf{ or } e \in \Greedy((B\cup\set{e})
    \setminus \Greedy(B\cup\set{e})) ]\\
		&\quad = p_e.
	\end{align*}
  The first equality above follows from the fact that $e$ is included
  in $S$ iff $e$ is included in either $S_i^1$ or $S_i^2$. The second
  equality follows from the fact that the distribution of $V_i$ is
  the same as $\mathcal{V}(1/m)$.  The third equality follows from
  the fact that the distribution of $A\sim\mathcal{V}(1/m)$
  conditioned on $e\in A$ is identical to the distribution of $B\cup
  \{e\}$ where $B\sim\mathcal{V}(1/m)$. Therefore
	\begin{align}
		\Pr[e \in S \cap \opt] &= p_e, \notag\\
		\Ex[\chr{S \cap \opt}] &= \vect{p}. \label{eq:nmg2}
	\end{align}
	By combining (\ref{eq:nmg1}), (\ref{eq:nmg2}), and
	Lemma~\ref{lem:lovasz}, we obtain
	\[
		\Ex[f(\Alg(S))] \geq \beta \cdotp \Ex[f(S \cap \opt)] \geq \beta
		\cdot f^-(\vect{p}).
	\]

\end{proof}
We can now combine Lemmas \ref{lem:non-monotone-union-greedy}  and
\ref{lem:non-monotone-single-machine} to obtain our main result for
non-monotone submodular maximization.

\begin{theorem}
\label{thm:main-non-monotone}
	Consider the problem of maximizing a submodular function under
	some hereditary constraint $\I$, and suppose that \Greedy
	satisfies \eqref{eq:greedy-strong-property} and \Alg is a
	$\beta$-approximation algorithm for this problem.  Then
	\NMRandGreeDI is (in expectation) an $\frac{\alpha\beta}{\alpha
	+ 2\beta}$-approximation algorithm for the same problem.
\end{theorem}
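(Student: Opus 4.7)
The plan is to follow the same template as the proof of Theorem~\ref{thm:main}, substituting the two non-monotone lemmas for the two monotone ones. Let $D$ be the final set returned by \NMRandGreeDI. By construction, $D$ is the best among $T = \Alg(S)$ and all the per-machine sets $S_i^1, S_i^2$, so in particular
\[
f(D) \;\ge\; \max_i \max\{f(S_i^1),\, f(S_i^2)\} \qquad\text{and}\qquad f(D) \;\ge\; f(\Alg(S)).
\]
Applying Lemma~\ref{lem:non-monotone-single-machine} (to any fixed machine $i$) and Lemma~\ref{lem:non-monotone-union-greedy} therefore yields the two expectation bounds
\[
\Ex[f(D)] \;\ge\; \tfrac{\alpha}{2}\, f^-(\chr{\opt} - \vect{p}),
\qquad
\Ex[f(D)] \;\ge\; \beta\, f^-(\vect{p}).
\]

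Next, I would take a weighted combination of these two inequalities whose coefficients make the two right-hand terms have a common factor. Multiplying the first by $2\beta$ and the second by $\alpha$ and adding gives
\[
(\alpha + 2\beta)\,\Ex[f(D)] \;\ge\; \alpha\beta\bigl( f^-(\vect{p}) + f^-(\chr{\opt} - \vect{p}) \bigr).
\]
I would then invoke the same Lov\'asz-extension argument used at the end of Theorem~\ref{thm:main}: by convexity of $f^-$ together with the positive-homogeneity-like property $f^-(c\vect{x}) \ge c f^-(\vect{x})$ for $c \in [0,1]$, one has $f^-(\vect{p}) + f^-(\chr{\opt} - \vect{p}) \ge f^-(\chr{\opt}) = f(\opt)$. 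Dividing through by $\alpha + 2\beta$ delivers the claimed approximation ratio $\alpha\beta/(\alpha + 2\beta)$.

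There is no real obstacle: the only subtlety is that the per-machine bound from Lemma~\ref{lem:non-monotone-single-machine} is on $\Ex[\max\{f(S_i^1), f(S_i^2)\}]$ rather than on a single greedy solution, and it already carries the factor $1/2$ (coming from $\max\{a,b\} \ge (a+b)/2$ inside that lemma). This factor is exactly what turns the denominator $\alpha + \beta$ of Theorem~\ref{thm:main} into $\alpha + 2\beta$ here; making sure the weights in the combination step above match this $1/2$ is the only thing one has to get right. The rest of the argument is a mechanical transcription of the monotone proof.
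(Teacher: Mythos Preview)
Your proposal is correct and follows essentially the same approach as the paper: you invoke Lemmas~\ref{lem:non-monotone-single-machine} and~\ref{lem:non-monotone-union-greedy} to obtain the two bounds $\Ex[f(D)] \ge \tfrac{\alpha}{2} f^-(\chr{\opt}-\vect{p})$ and $\Ex[f(D)] \ge \beta f^-(\vect{p})$, take the weighted combination with weights $2\beta$ and $\alpha$, and then apply the Lov\'asz-extension convexity argument exactly as in Theorem~\ref{thm:main}. Your commentary on why the extra factor of $2$ appears in the denominator is also accurate.
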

\begin{proof}
	Let $S^1_i = \Greedy(V_i)$, $S^2_i = \Greedy(V_i \setminus
	S^1_i)$, and $S = \bigcup_i(S^1_i \cup S^2_i)$ be the set of
	elements on the last machine, and $T = \Alg(S)$ be the solution
	produced on the last machine.  Then, the output $D$ produced by
	\RandGreeDI satisfies $f(D) \ge \max_i\max\{f(S^1_i), f(S^2_i)\}$
	and $f(D) \ge f(T)$.  Thus, from Lemmas
	\ref{lem:non-monotone-single-machine} and
	\ref{lem:non-monotone-union-greedy} we have:
	\begin{align}
		\Ex[f(D)] &\ge \frac{\alpha}{2} \cdot f^-(\chr{\opt} -
		\vect{p}), \label{eq:D1-nm}\\
		\Ex[f(D)] &\ge \beta \cdot
		f^-(\vect{p}).\label{eq:D2-nm}
	\end{align}
	By combining \eqref{eq:D1-nm} and \eqref{eq:D2-nm}, we obtain
	\begin{align*}
		\left(2\beta + \alpha\right)\Ex[f(D)] &\ge
		\alpha\beta[f^-(\vect{p}) + f^-(\chr{\opt} - \vect{p})]\\
		&\ge \alpha\beta \cdot f^-(\chr{\opt})\\
		&= \alpha \beta \cdot f(\opt).
	\end{align*}
	In the second inequality, we have used the fact that $f^-$ is
	convex and $f^-(c \cdotp \vect{x}) \geq c f^-(\vect{x})$ for any
	constant $c \in [0, 1]$.
\end{proof}
We remark that one can use the following approach on the last machine
\cite{GuptaRST10}. As in the first round, we run \Greedy twice to
obtain two solutions $T_1 = \Greedy(S)$ and $T_2 = \Greedy(S
\setminus T_1)$. Additionally, we select a subset $T_3 \subseteq T_1$
using an \emph{unconstrained} submodular maximization algorithm on
$T_1$, such as the Double Greedy algorithm of \cite{BuchbinderFNS12},
which is a $\frac{1}{2}$-approximation.  The final solution $T$ is
the best solution among $T_1, T_2, T_3$. If \Greedy satisfies
property \ref{eq:greedy-strong-property}, then it follows from the
analysis of \cite{GuptaRST10} that the resulting solution $T$
satisfies $f(T) \ge \frac{\alpha}{2(1 + \alpha)}\cdot f(\opt)$.  This
gives us the following corollary of Theorem
\ref{thm:main-non-monotone}:
\begin{corollary}
\label{cor:non-monotone-randomized-greedi-approx}
  Consider the problem of maximizing a submodular function subject to
  some hereditary constraint $\I$ and suppose that \Greedy satisfies
  \eqref{eq:greedy-strong-property} for this problem. Let \Alg be the
  algorithm described above that uses \Greedy twice and Double
  Greedy.  Then \NMRandGreeDI achieves (in expectation) an
  $\frac{\alpha}{4 + 2\alpha}$-approximation for the same problem.
\end{corollary}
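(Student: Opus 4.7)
The plan is to derive the corollary directly from Theorem \ref{thm:main-non-monotone} by supplying the appropriate approximation ratio $\beta$ for the final-machine algorithm \Alg described just above the corollary. The two ingredients needed are (i) that \Alg is itself a $\beta$-approximation for non-monotone submodular maximization under the hereditary constraint $\I$ (on the restricted ground set $S$), and (ii) algebraic simplification of the expression $\frac{\alpha\beta}{\alpha+2\beta}$ from Theorem \ref{thm:main-non-monotone}.

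For step (i), I would invoke the analysis of \cite{GuptaRST10}, which the text flags immediately before the corollary statement: under the greedy property \eqref{eq:greedy-strong-property} with constant $\alpha$, the combination of $T_1 = \Greedy(S)$, $T_2 = \Greedy(S \setminus T_1)$, and $T_3 \subseteq T_1$ obtained by Double Greedy \cite{BuchbinderFNS12} (which is $\tfrac12$-approximate for unconstrained submodular maximization) yields $f(T) \ge \tfrac{\alpha}{2(1+\alpha)} \cdot f(\opt_S)$, where $\opt_S$ is an optimal feasible subset of $S$. Since \Alg is applied here with $S$ as its input ground set and $\I$ as its hereditary constraint, this is precisely the statement that \Alg is a $\beta$-approximation in the sense required by Theorem \ref{thm:main-non-monotone}, with
\[
\beta = \frac{\alpha}{2(1+\alpha)}.
\]

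For step (ii), I would plug this $\beta$ into the ratio guaranteed by Theorem \ref{thm:main-non-monotone} and simplify:
\[
\frac{\alpha\beta}{\alpha+2\beta}
= \frac{\alpha \cdot \frac{\alpha}{2(1+\alpha)}}{\alpha + \frac{\alpha}{1+\alpha}}
= \frac{\frac{\alpha^2}{2(1+\alpha)}}{\frac{\alpha(2+\alpha)}{1+\alpha}}
= \frac{\alpha}{2(2+\alpha)}
= \frac{\alpha}{4+2\alpha},
\]
which is the claimed bound.

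There is essentially no obstacle of substance: the corollary is a direct composition of Theorem \ref{thm:main-non-monotone} with the known post-processing guarantee of \cite{GuptaRST10}. The only mild subtlety to check is that the hypothesis of Theorem \ref{thm:main-non-monotone} is met on the last machine, i.e.\ that \Alg truly delivers a $\beta$-approximation relative to the best feasible subset of its input $S$ (not just of $V$); but this is automatic, since \eqref{eq:greedy-strong-property} holding for the original instance implies the same property for \Greedy run on any subset of $V$ under the inherited hereditary constraint, which is exactly what the \cite{GuptaRST10} analysis requires.
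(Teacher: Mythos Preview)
Your proposal is correct and follows essentially the same approach as the paper: establish that \Alg is a $\beta$-approximation with $\beta=\frac{\alpha}{2(1+\alpha)}$ via the \cite{GuptaRST10} analysis, then substitute into Theorem~\ref{thm:main-non-monotone} and simplify. The only cosmetic difference is that the paper writes out the short derivation of the $\frac{\alpha}{2(1+\alpha)}$ bound explicitly (three inequalities from \eqref{eq:greedy-strong-property} and Double Greedy, combined with \cite[Lemma~2]{GuptaRST10}), whereas you cite it directly as stated in the paragraph preceding the corollary.
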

\begin{proof}
By \eqref{eq:greedy-strong-property} and the approximation guarantee of the Double Greedy algorithm, we have:
\begin{align}
f(T) \ge f(T_1) &\ge \alpha \cdot f(T_1 \cup \opt) \label{eq:corrT1} \\
f(T) \ge f(T_2) &\ge \alpha \cdot f(T_2 \cup (\opt \setminus T_1)) \label{eq:corrT2}\\
f(T) \ge f(T_3) &\ge \frac{1}{2}f(T_1 \cap \opt). \label{eq:corrT3}
\end{align}
Additionally, from \cite[Lemma 2]{GuptaRST10}, we have:
\begin{equation*}
f(T_1 \cup \opt) + f(T_2 \cup (\opt \setminus T_1)) + f(T_1 \cap \opt) \ge f(\opt)
\end{equation*}
By combining the inequalities above, we obtain:
\begin{equation*}
(1 + \alpha)f(T) \ge 
\frac{\alpha}{2}\left( f(T_1 \cup \opt) + 
 f(T_2 \cup (\opt \setminus T_1)) + 
 f(T_1 \cap \opt) \right)
\ge \frac{\alpha}{2}f(\opt)
\end{equation*}
and hence $f(T) \ge \frac{\alpha}{2(1 + \alpha)}\cdot f(\opt)$ as claimed.  Setting $\beta = \frac{\alpha}{2(\alpha + 1)}$ in Theorem \ref{thm:main-non-monotone}, we obtain an approximation ratio of $\frac{\alpha}{4 + 2\alpha}$.
\end{proof}

\section{Experiments}
\label{sec:experiments}
\begin{figure*}[p]
% Kosarak (random, greedi, and centralized greedy)
\subfigure[][Kosarak dataset]{
\begin{minipage}[t]{0.31\textwidth}
  \centering
  \includegraphics[scale=0.29]{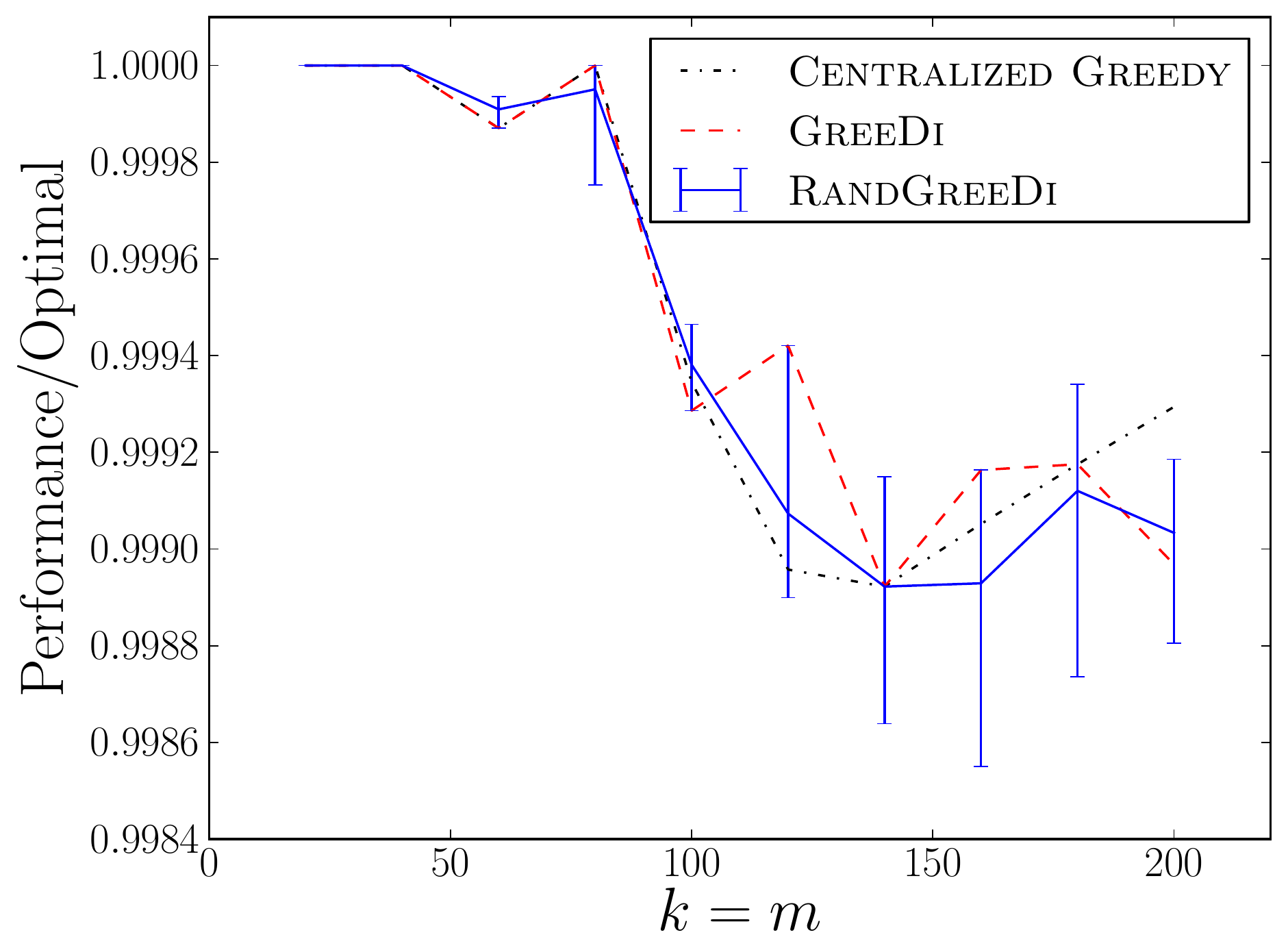}
\end{minipage}
\label{fig:kosarak-rand-det-central}
}
% Accidents dataset (random, greedi, and centralized greedy)
\subfigure[][accidents dataset]{
\begin{minipage}[t]{0.31\textwidth}
  \centering
  \includegraphics[scale=0.29]{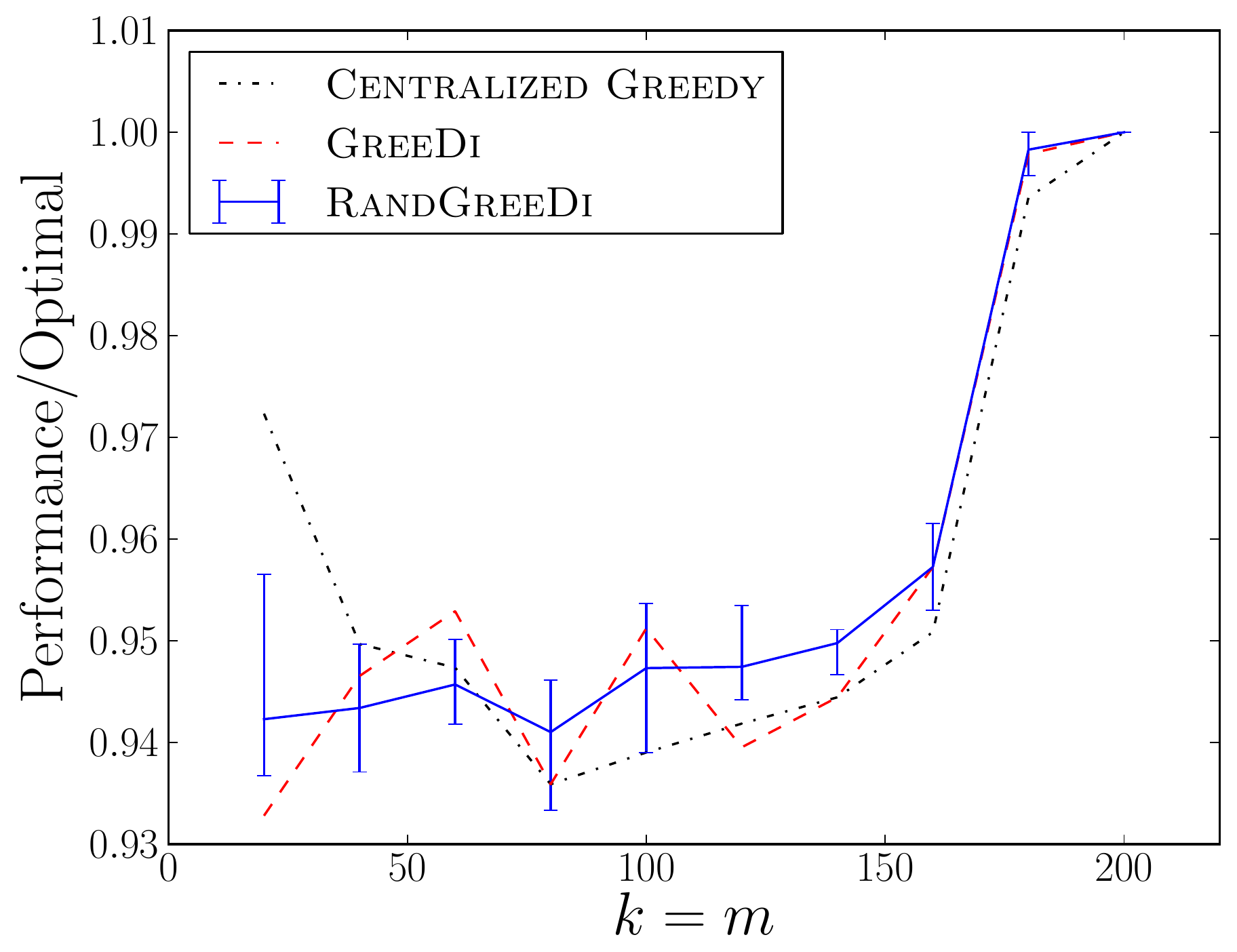}
\end{minipage}
\label{fig:acc-rand-det-central}
}
%10K Tiny images (random and greedi)
\subfigure[][10K tiny images]{
\begin{minipage}[t]{0.31\textwidth}
  \centering
  \includegraphics[scale=0.29]{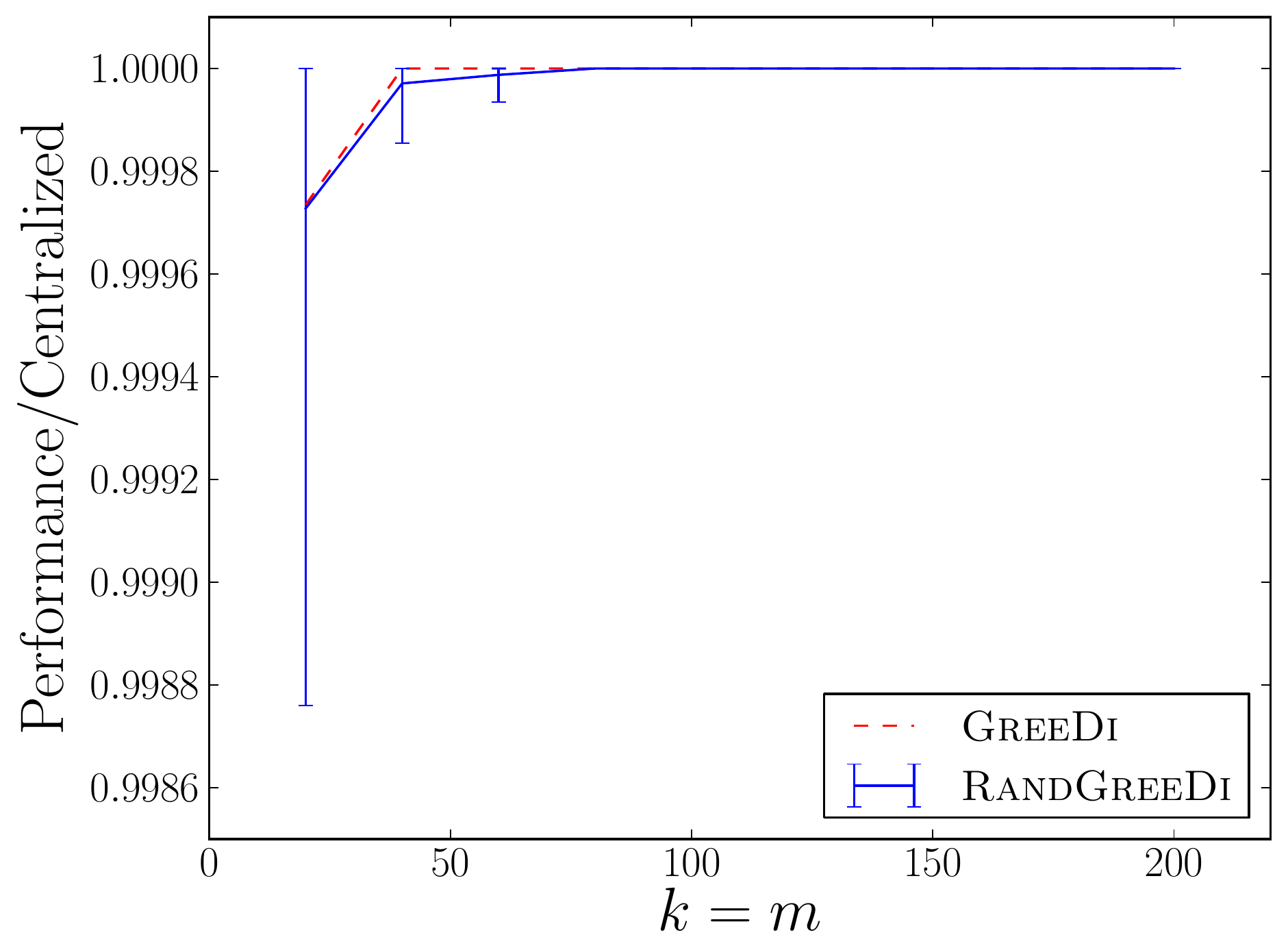}
\end{minipage}
\label{fig:ti-rand-det}
}
% Kosarak (sample and prune and lazy greedy)
\subfigure[][Kosarak dataset]{
\begin{minipage}[t]{0.31\textwidth}
  \centering
  \includegraphics[scale=0.29]{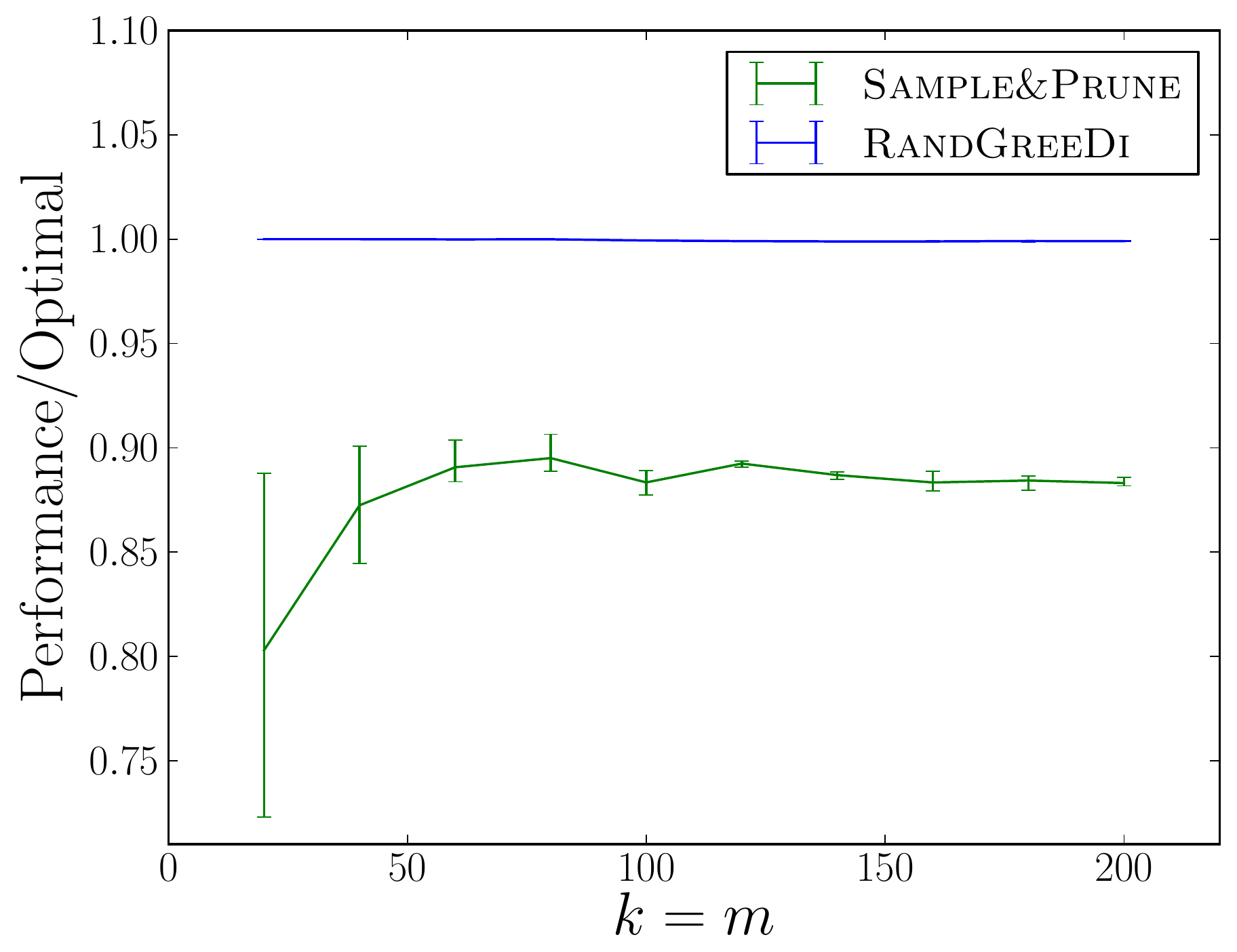}
\end{minipage}
\label{fig:kosarak-rand-sprune}
}
% Accidents dataset (sample and prune and lazy greedy)
\subfigure[][accidents dataset]{
\begin{minipage}[t]{0.31\textwidth}
  \centering
  \includegraphics[scale=0.29]{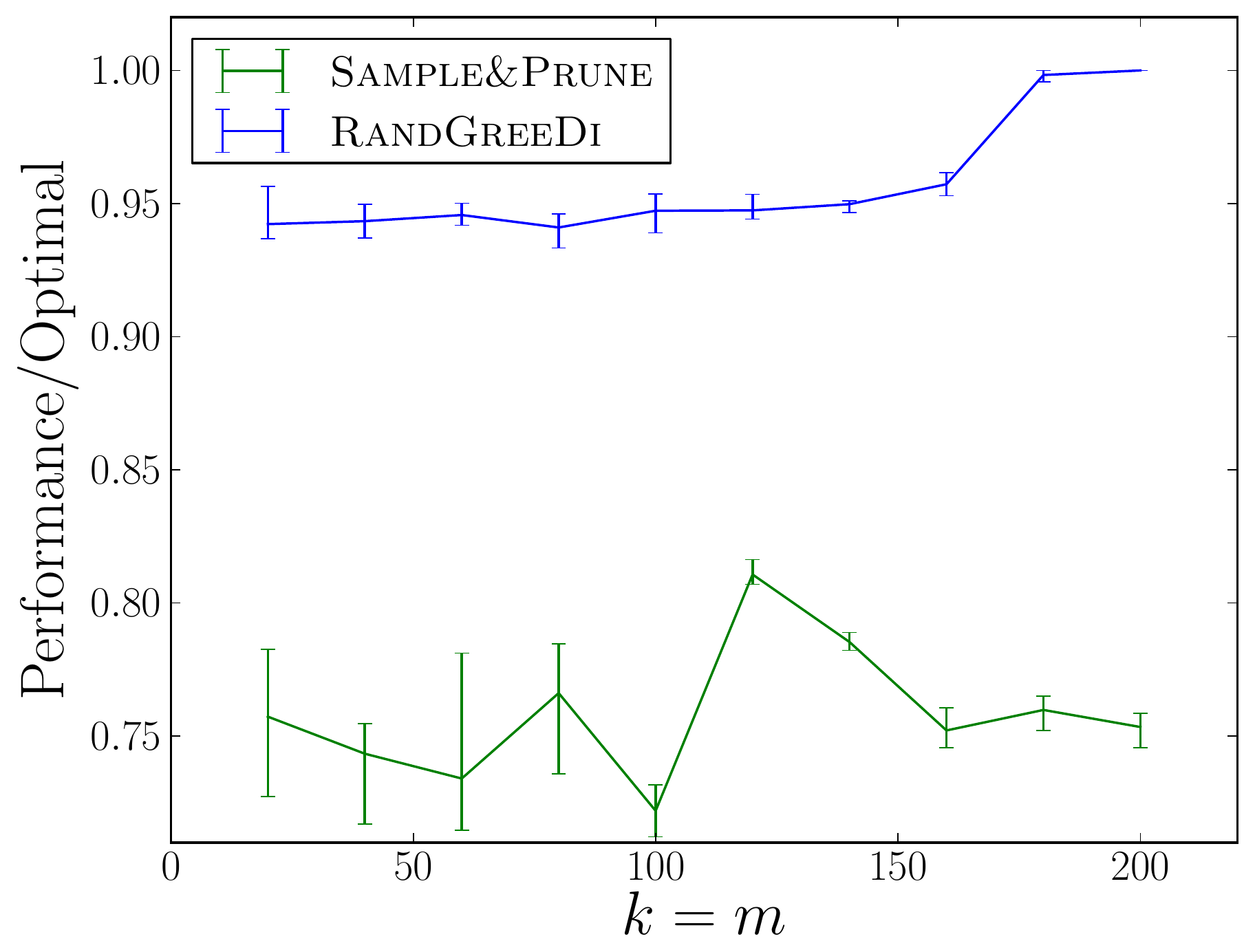}
\end{minipage}
\label{fig:acc-rand-sprune}
}
% 10K Tiny images (random and sample and prune)
\subfigure[][10K tiny images]{
\begin{minipage}[t]{0.31\textwidth}
  \centering
  \includegraphics[scale=0.29]{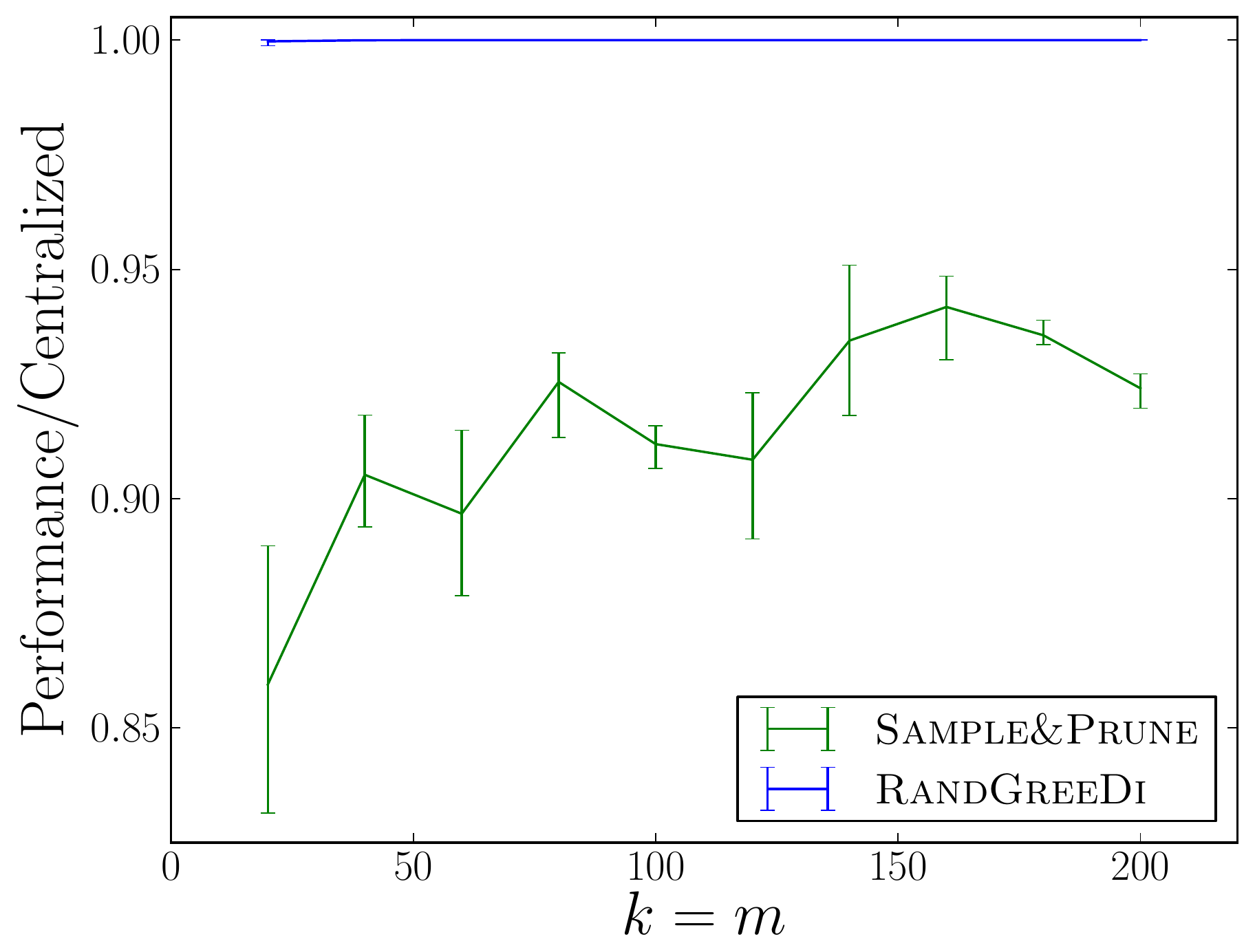}
\end{minipage}
\label{fig:ti-rand-sprune}
}
% Synthetic DYR Instance
\subfigure[][synthetic diverse-yet-relevant instance ($n = 10000$,
$\lambda = n/k$)]{
\begin{minipage}[t]{0.31\textwidth}
  \centering
  \includegraphics[scale=0.29]{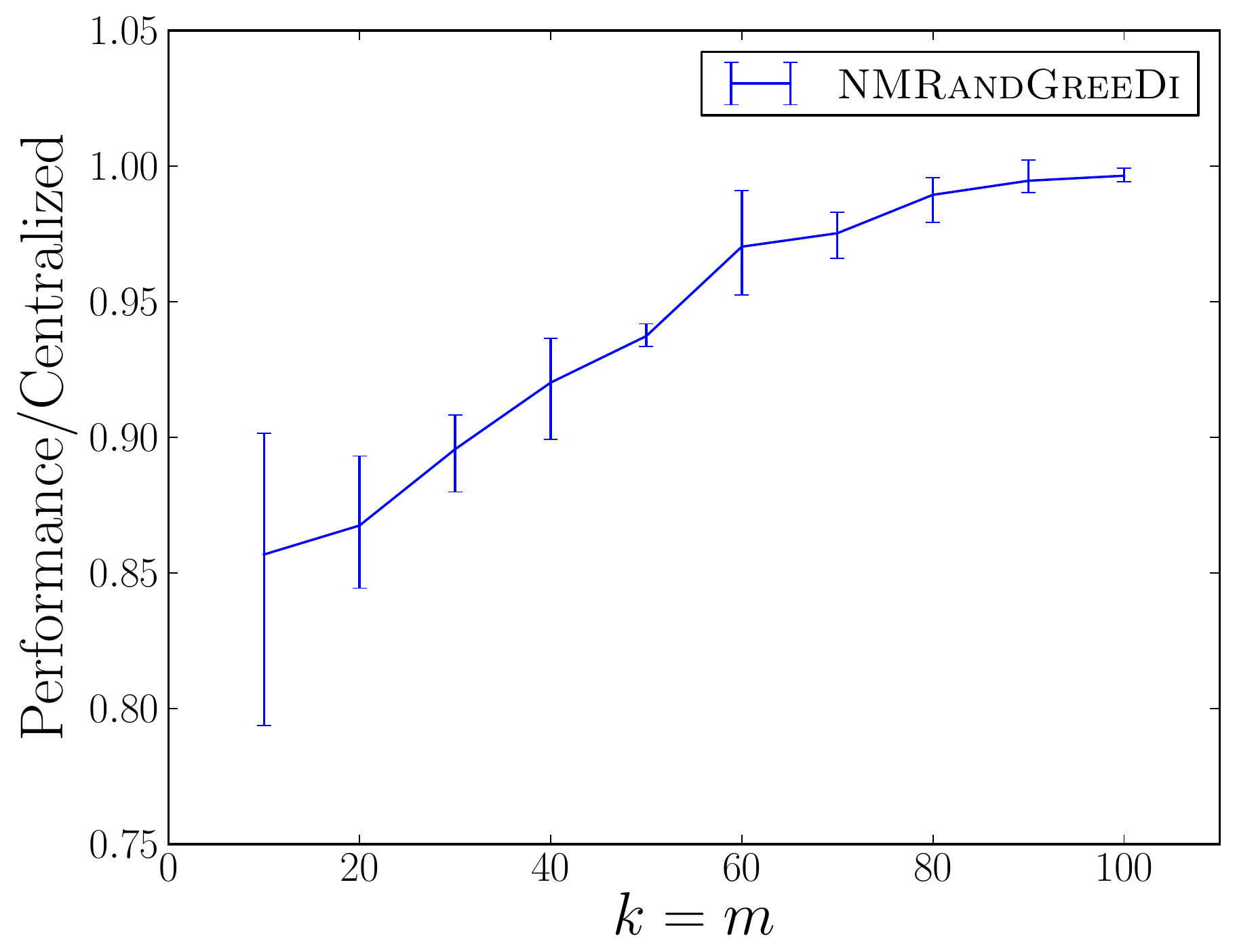}
\end{minipage}
\label{fig:dyr-synth-rand}
}
% Synthetic Hard Instance
\subfigure[][synthetic hard instance for \GreeDI]{
\begin{minipage}[t]{0.31\textwidth}
  \centering
  \includegraphics[scale=0.29]{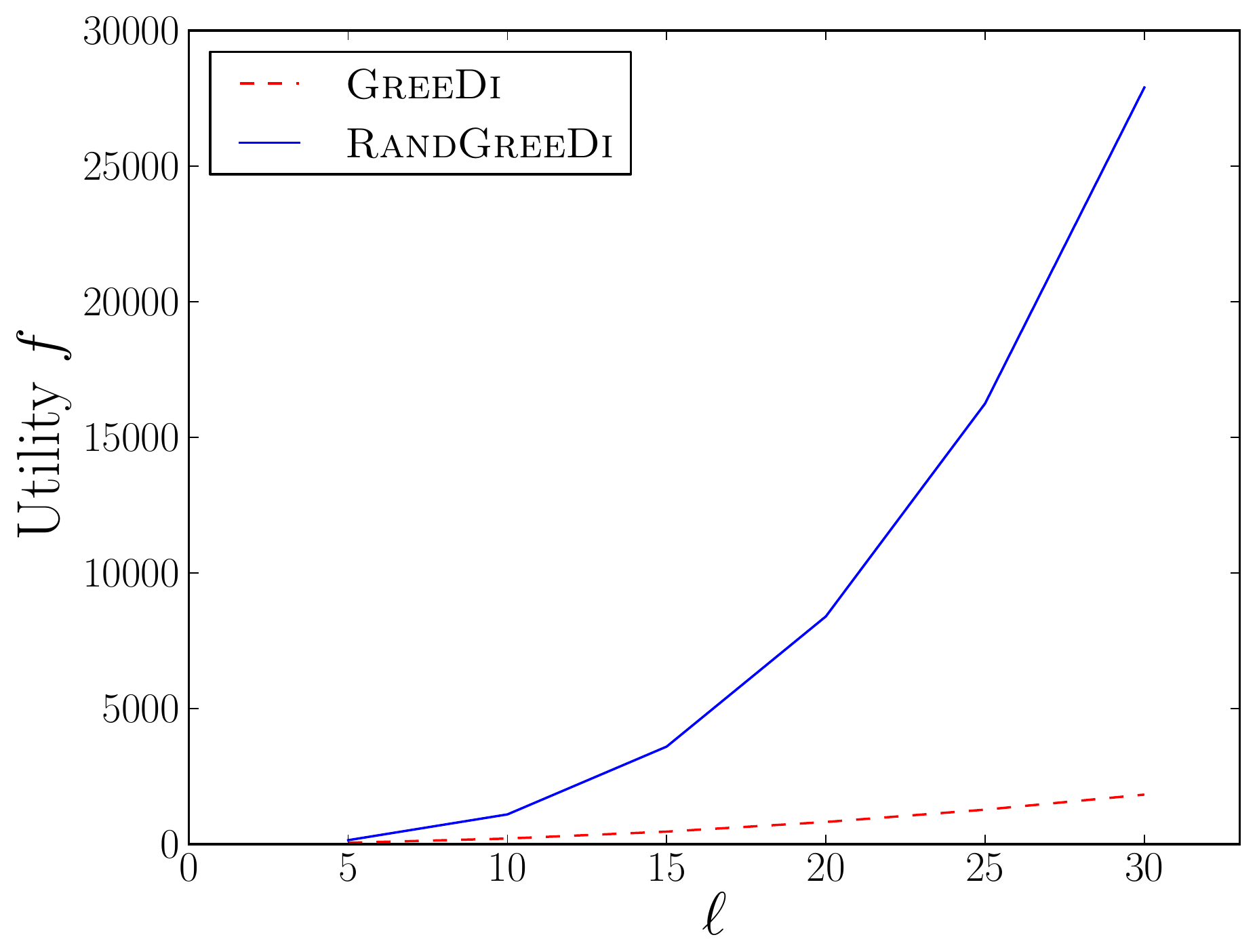}
\end{minipage}
\label{fig:tightInst}
}
%1M Tiny Images
\subfigure[][1M tiny images]{
\begin{minipage}[t]{0.31\textwidth}
  \centering
  \includegraphics[scale=0.29]{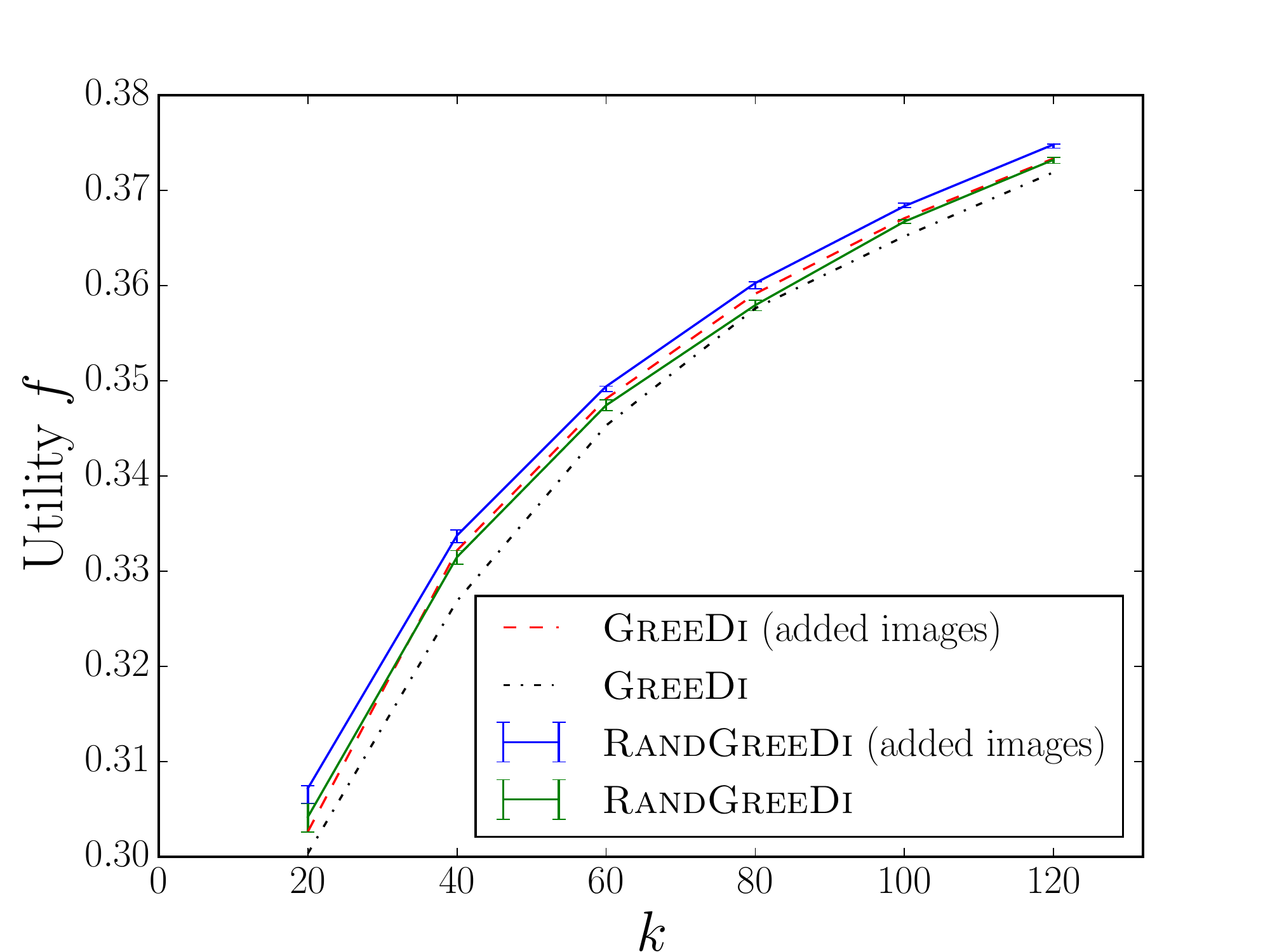}
\end{minipage}
\label{fig:million-images}
}

% Matroid Ellipse Coverage Plot 1 (n=900, r=5)
\subfigure[][matroid coverage $(n = 900, r=5)$]{
\begin{minipage}[t]{0.31\textwidth}
  \centering
  \includegraphics[scale=0.29]{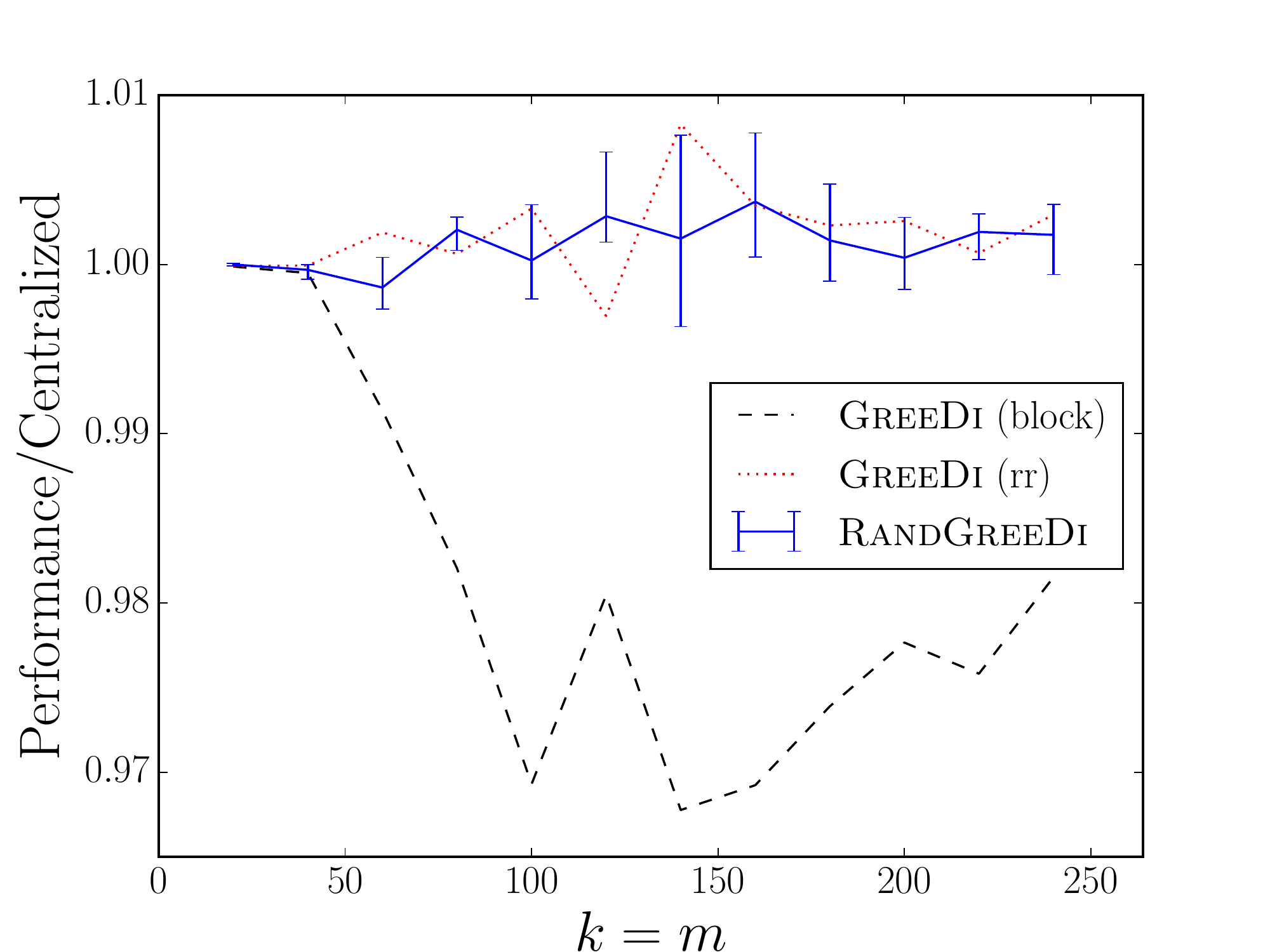}
\end{minipage}
\label{fig:matroid-coverage1}
}
% Matroid Ellipse Coverage Plot 1 (n=100, r=100)
\subfigure[][matroid coverage $(n= 100, r=100)$]{
\begin{minipage}[t]{0.33\textwidth}
  \centering
  \includegraphics[scale=0.29]{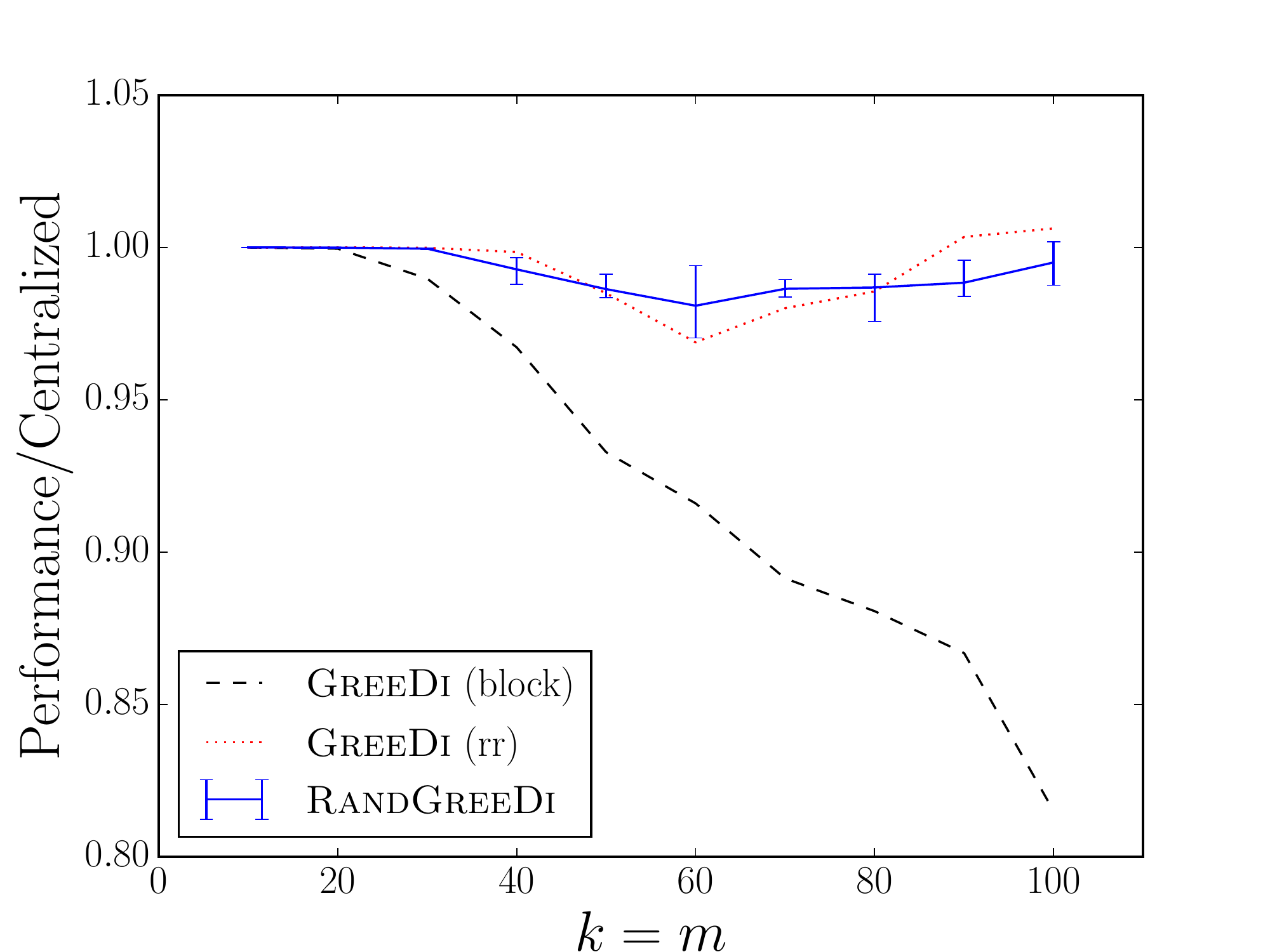}
\end{minipage}
\label{fig:matroid-coverage2}
}
\caption{Experimental Results}
\end{figure*}

We experimentally evaluate and compare the following distributed
algorithms for maximizing a monotone submodular function subject to a
cardinality constraint: the \RandGreeDI algorithm described in
Section~\ref{sec:rand-distr-greedy}, the deterministic \GreeDI
algorithm of \cite{MKSK13}, and the \SamplePrune algorithm of
\cite{KMVV13}. We run these algorithms in several scenarios and we
evaluate their performance relative to the centralized \Greedy
solution on the entire dataset.

{\bf Exemplar based clustering.}
Our experimental setup is similar to that of \cite{MKSK13}.  Our goal
is to find a representative set of objects from a dataset by solving
a $k$-medoid problem \cite{KaufmanR09} that aims to minimize the sum of
pairwise dissimilarities between the chosen objects and the entire
dataset. Let $V$ denote the set of objects in the dataset and let $d:
V \times V \rightarrow \R$ be a dissimilarity function; we assume
that $d$ is symmetric, that is, $d(i, j) = d(j, i)$ for each pair $i,
j$. Let $L: 2^V \rightarrow \R$ be the function such that $L(A) = {1
\over \card{V}} \sum_{v \in V} \min_{a \in A} d(a, v)$ for each set
$A \subseteq V$. We can turn the problem of minimizing $L$ into the
problem of maximizing a monotone submodular function $f$ by
introducing an auxiliary element $v_0$ and by defining $f(S) =
L(\set{v_0}) - L(S \cup \set{v_0})$ for each set $S \subseteq V$.

\emph{Tiny Images experiments:}
In our experiments, we used a subset of the Tiny Images dataset
consisting of $32 \times 32$ RGB images \cite{tinyimg}, each
represented as $3,072$ dimensional vector.  We subtracted from each
vector the mean value and normalized the result, to obtain a
collection of $3,072$-dimensional vectors of unit norm.  We
considered the distance function $d(x, y) = \|x - y\|^2$ for every
pair $x, y$ of vectors.  We used the zero vector as the auxiliary
element $v_0$ in the definition of $f$.

In our smaller experiments, we used 10,000 tiny images, and compared
the utility of each algorithm to that of the centralized greedy. The
results are summarized in Figures \ref{fig:ti-rand-det} and
\ref{fig:ti-rand-sprune}.

In our \emph{large scale experiments}, we used one million tiny
images, and $m = 100$ machines.  In the first round of the
distributed algorithm, each machine ran the \Greedy algorithm to
maximize a restricted objective function $f$, which is based on the
average dissimilarity $L$ taken over only those images assigned to
that machine.  Similarly, in the second round, the final machine
maximized an objective function $f$ based on the total dissimilarity
of all those images it received .  We also considered a variant
similar to that described by \cite{MKSK13}, in which 10,000
additional random images from the original dataset were added to the
final machine.  The results are summarized in Figure
\ref{fig:million-images}.

{\bf Remark on the function evaluation.} In decomposable cases such
as exemplar clustering, the function is a sum of distances over all
points in the dataset. By concentration results such as Chernoff
bounds, the sum can be approximated additively with high probability
by sampling a few points and using the (scaled) empirical sum. The
random subset each machine receives can readily serve as the samples
for the above approximation. Thus the random partition is useful for
for evaluating the function in a distributed fashion, in addition to
its algorithmic benefits.

{\bf Maximum Coverage experiments.}  We ran several experiments using
instances of the Maximum Coverage problem. In the Maximum Coverage
problem, we are given a collection $\sC \subseteq 2^V$ of subsets of
a ground set $V$ and an integer $k$, and the goal is to select $k$ of
the subsets in $\sC$ that cover as many elements as possible.

\emph{Kosarak and accidents datasets}\footnote{The data is available
at \url{http://fimi.ua.ac.be/data/}.}: We evaluated and compared the
algorithms on the datasets used by Kumar \etal \cite{KMVV13}. In both
cases, we computed the optimal centralized solution using CPLEX, and
calculated the actual performance ratio attained by the algorithms.
The results are summarized in Figures
\ref{fig:kosarak-rand-det-central}, \ref{fig:kosarak-rand-sprune},
\ref{fig:acc-rand-det-central}, \ref{fig:acc-rand-sprune}.

\emph{Synthetic hard instances:}
We generated a synthetic dataset with hard instances for the
deterministic \GreeDI. We describe the instances in
Section~\ref{app:det-greedi-tight}.  We ran the \GreeDI algorithm
with a worst-case partition of the data.  The results are summarized
in Figure \ref{fig:tightInst}.

% non-monotone functions
{\bf Finding diverse yet relevant items.}
We evaluated our \NMRandGreeDI algorithm on the following instance of
\emph{non-monotone} submodular maximization subject to a cardinality
constraint. We used the objective function of Lin and Bilmes
\cite{LinB09}: $f(A) = \sum_{i \in V} \sum_{j \in A} s_{ij} -
\lambda \sum_{i, j \in A} s_{ij}$, where $\lambda$ is a redundancy
parameter and $\set{s_{ij}}_{ij}$ is a similarity matrix. We
generated an $n \times n$ similarity matrix with random entries
$s_{ij} \in \mathcal{U}(0, 100)$ and we set $\lambda = n/k$. The
results are summarized in Figure \ref{fig:dyr-synth-rand}.

% matroid constraints
{\bf Matroid constraints.}
In order to evaluate our algorithm on a matroid constraint, we
considered the following variant of maximum coverage: we are given a
space containing several demand points and $n$ facilities (e.g.
wireless access points or sensors).  Each facility can operate in one
of $r$ modes, each with a distinct coverage profile.  The goal is to
find a subset of at most $k$ facilities to activate, along with a
single mode for each activated facility, so that the total number of
demand points covered is maximized.  In our experiment, we placed
250,000 demand points in a grid in the unit square, together with a
grid of $n$ facilities.  We modeled coverage profiles as ellipses
centered at each facility with major axes of length $0.1\ell$, minor
axes of length $0.1/\ell$ rotated by $\rho$ where $\ell \in
\mathcal{N}(3,\frac{1}{3})$ and $\rho \in \mathcal{U}(0,2\pi)$ are
chosen randomly for each ellipse.  We performed two series of
experiments.  In the first, there were $n=900$ facilities, each with
$r=5$ coverage profiles, while in the second there were $n=100$
facilities, each with $r=100$ coverage profiles.

The resulting problem instances were represented as ground set
comprising a list of ellipses, each with a designated facility,
together with a partition matroid constraint ensuring that at most
one ellipse per facility was chosen. As in our large-scale
exemplar-based clustering experiments, we considered 3 approaches for
assigning ellipses to machines: assigning consecutive blocks of
ellipses to each machine, assigning ellipses to machines in
round-robin fashion, and assigning ellipses to machines uniformly at
random. The results are summarized in Figures
\ref{fig:matroid-coverage1} and \ref{fig:matroid-coverage2}; in these
plots, \GreeDI{(rr)} and \GreeDI{(block)} denote the results of
\GreeDI when we assign the ellipses to machines deterministically in
a round-robin fashion and in consecutive blocks, respectively.

In general, our experiments show that random and round robin are the
best allocation strategies.  One explanation for this phenomenon is
that both of these strategies ensure that each machine receives a few
elements from several distinct partitions in the first round.  This
allows each machine to return a solution containing several elements.

\medskip
{\large \bf Acknowledgements.} We thank Moran Feldman for suggesting
a modification to our original analysis that led to the simpler and
stronger analysis included in this version of the paper.

%\newpage
\bibliographystyle{plain}
\bibliography{paper}

%\newpage
\appendix

\section{Improved Deterministic GreeDI analysis}
\label{app:det-greedi-analysis}

\newcommand{\cB}{\mathcal{B}}
\newcommand{\cE}{\mathcal{E}}
\newcommand{\cM}{\mathcal{M}}
\newcommand{\cI}{\mathcal{I}}
\newcommand{\optb}{\tilde{\opt}}
\newcommand{\ground}{X}
\newcommand{\dummy}{D}
\newcommand{\ib}[2]{b_{#1}^{#2}}
\newcommand{\iB}[2]{B_{#1}^{#2}}
\newcommand{\tB}{\tilde{B}}
\newcommand{\tb}{\tilde{b}}
\newcommand{\tf}{\tilde{f}}
\newcommand{\tO}{\tilde{\opt}}

Let $\opt$ be an arbitrary collection of $k$ elements from $V$, and
let $M$ be the set of machines that have some element of $\opt$
placed on them.  For each $j \in M$ let $O_{j}$ be the set of
elements of $\opt$ placed on machine $j$, and let $r_j = |O_j|$ (note
that $\sum_{j \in M} r_j = k$).  Similarly, let $E_j$ be the set of
elements returned by the greedy algorithm on machine $j$.  Let $e_j^i
\in E_j$ denote the element chosen in the $i$th round of the greedy
algorithm on machine $j$, and let $E_j^i$ denote the set of all
elements chosen in rounds $1$ through $i$.  Finally, let $E = \cup_{j
\in M}E_j$, and $E^i = \cup_jE_j^i$.

We consider the marginal values:
\begin{align*}
x_j^i &= f_{E_j^{i-1}}(e_j^i) = f(E_j^i) - f(E_j^{i -1}) \\
y_j^i &= f_{E_j^{i-1}}(O_j),
\end{align*}
for each $1 \le i \le k$.  Note that because each element $e_j^i$ was
selected by in the $i$th round of the greedy algorithm on machine
$j$, we must have 
\begin{equation}
\label{eq:greedy}
x_j^i \ge \max_{o \in O_j}f_{E_j^{i-1}}(o) \ge \frac{y_j^i}{r_j}
\end{equation}
for all $j \in M$ and $i \in [k]$.  Moreover, the sequence
$x_j^1,\ldots,x_j^k$ is non-increasing for all $j \in M$.  Finally,
define $x_j^{k+1} = y_j^{k+1} = 0$ and $E_j^{k+1} = E_j^k$ for all
$j$.  We are now ready to prove our main claim.  
\begin{theorem}
\label{thm:greedi-improved}
Let $\tO \subseteq E$ be a set of $k$ elements from $E$ that
maximizes $f$.  Then,
\begin{equation*}
f(\opt) \le 2\sqrt{k}f(\tO).
\end{equation*}
\end{theorem}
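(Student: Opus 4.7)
The plan is to run a threshold argument. Set $\tau = f(\tO)/\sqrt{k}$ and for each machine $j \in M$ define $i_j$ to be the smallest index with $x_j^{i_j} < \tau$ (taking $i_j = k+1$ if no such index exists). The idea is that the prefix $E_j^{i_j-1}$ consists of ``high-marginal'' greedy picks, each of value at least $\tau$, while once the marginal on machine $j$ drops below $\tau$, the greedy inequality $x_j^{i_j} \ge y_j^{i_j}/r_j$ forces $y_j^{i_j} < r_j\tau$, i.e., very little of $O_j$ remains to be gained on top of $E_j^{i_j-1}$.

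First I would dispatch the degenerate case: if $i_j > k$ on some machine, then $f(E_j) = \sum_{i \le k} x_j^i \ge k\tau = \sqrt{k}\,f(\tO)$; since $E_j$ is a $k$-subset of $E$ and $\tO$ is the optimal such, $f(E_j) \le f(\tO)$, forcing $\sqrt{k} \le 1$, a trivial regime. So assume $i_j \le k$ for every $j \in M$. By monotonicity and the definition of $y_j^{i_j}$,
\[
f(O_j) \;\le\; f(E_j^{i_j-1} \cup O_j) \;=\; f(E_j^{i_j-1}) + y_j^{i_j} \;\le\; f(E_j^{i_j-1}) + r_j\tau .
\]
Subadditivity of $f$ gives $f(\opt) \le \sum_{j \in M} f(O_j)$, and using $\sum_j r_j = k$ yields
\[
f(\opt) \;\le\; \sum_{j \in M} f(E_j^{i_j-1}) + k\tau \;=\; \sum_{j \in M} f(E_j^{i_j-1}) + \sqrt{k}\,f(\tO).
\]
The remaining task is to show $\sum_{j \in M} f(E_j^{i_j-1}) \le \sqrt{k}\,f(\tO)$, which matches the factor of $2$ in the claimed bound.

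The main obstacle is this last inequality, particularly when $|M|$ is large. When $|M| \le \sqrt{k}$ it follows trivially from $f(E_j^{i_j-1}) \le f(E_j) \le f(\tO)$ summed over $j \in M$. When $|M| > \sqrt{k}$, the non-increasing nature of $x_j^i$ together with $(i_j-1)\tau \le f(E_j^{i_j-1}) \le f(\tO)$ gives $i_j - 1 \le \sqrt{k}$ per machine, but summing naively across $|M|$ machines bounds only the total above-threshold count by $|M|\sqrt{k}$, which need not be at most $k$. Subadditivity runs the wrong way across the disjoint sets $E_j^{i_j-1}$, so one cannot reduce $\sum_j f(E_j^{i_j-1})$ to $f\!\left(\bigcup_j E_j^{i_j-1}\right) \le f(\tO)$. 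The route I would pursue is to invoke the optimality of $\tO$ as the best $k$-subset of $E$: a swap argument shows that for each $e \in E \setminus \tO$ and each $w \in \tO$, $f_\tO(e) \le f_{\tO - w}(w)$, and averaging over $w$ gives $f_\tO(e) \le f(\tO)/k$. Splitting each $E_j^{i_j-1}$ into its intersection with $\tO$ (of total size at most $k$) and its complement (on which the uniform swap bound applies) should then aggregate correctly to $\sqrt{k}\,f(\tO)$, completing the $2\sqrt{k}$ bound.
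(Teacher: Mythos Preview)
Your threshold set-up and the first half of the argument are fine, but the inequality you isolate as ``the remaining task,'' namely $\sum_{j\in M} f\bigl(E_j^{i_j-1}\bigr) \le \sqrt{k}\,f(\tO)$, is false in general, and the swap argument you sketch cannot rescue it. Take a coverage instance with $m=k$ machines and universe $\{u,a_1,\dots,a_k\}$ where $w(u)=W$ is large and each $w(a_j)=1$. Machine $j$ holds $F_j=\{u\}$, $O_j=\{a_j\}$, and padding empty sets; set $\opt=\{O_1,\dots,O_k\}$, so $r_j=1$ for every $j$. Greedy on machine $j$ first picks $F_j$ (marginal $W$) and then $O_j$ (marginal $1$), so every $F_j$ and every $O_j$ lies in $E$. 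The best $k$-subset of $E$ is one copy of some $F_j$ together with $k-1$ of the $O_j$'s, giving $f(\tO)=W+k-1$. Your threshold is $\tau=(W+k-1)/\sqrt{k}$; for $W$ large we have $x_j^1=W\ge\tau>1=x_j^2$, hence $i_j=2$ and $E_j^{i_j-1}=\{F_j\}$ on every machine. Then
\[
\sum_{j\in M} f\bigl(E_j^{i_j-1}\bigr)=kW,\qquad \sqrt{k}\,f(\tO)=\sqrt{k}\,(W+k-1),
\]
so your target inequality fails by a factor of roughly $\sqrt{k}$.

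The swap bound $f_{\tO}(e)\le f(\tO)/k$ for $e\in E\setminus\tO$ is correct, but it controls the marginal of $e$ \emph{on top of} $\tO$, not the standalone value $f(\{e\})$ or $f(E_j^{i_j-1})$: in the example each $F_j\notin\tO$ has $f_{\tO}(F_j)=0$ while $f(\{F_j\})=W$. Splitting $E_j^{i_j-1}$ into its $\tO$-part $A_j$ and complement $B_j$ and applying subadditivity leaves you with $\sum_j f(A_j)$ and $\sum_j f(B_j)$, and neither is controlled by the swap inequality (for disjoint $A_j\subseteq\tO$ the sum $\sum_j f(A_j)$ can exceed $f(\tO)$ arbitrarily for submodular $f$). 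The structural problem is that by first passing through $\sum_j f(O_j)$ and then through $\sum_j f(E_j^{i_j-1})$, you are summing values of sets that may be highly redundant across machines, and submodularity offers no upper bound on such a sum. The paper's proof sidesteps this entirely: it fixes a \emph{single} global index $i$ (chosen via an aggregate stopping condition, not a per-machine threshold) and works with $f(E^i)=f\bigl(\bigcup_{j} E_j^i\bigr)$ rather than $\sum_j f(E_j^i)$. The union automatically absorbs cross-machine redundancy, and it is precisely this that permits the bound $f(E^i)\le\sqrt{k}\,f(\tO)$.
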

\begin{proof}

For every $i \in [k]$ we have
\begin{align}
\label{eq:main}
f(\opt) &\le f(\opt \cup E^i) \notag \\
&= f(E^i) + f_{E^i}(\opt) \notag \\ 
&\le f(E^i) + \sum_{j \in M} f_{E^i}(O_j) \notag \\
&\le f(E^i) + \sum_{j \in M}f_{E_j^i}(O_j),
\end{align}
where the first inequality follows from monotonicity of $f$, and the
last two from submodularity of $f$.    

Let $i \le k$ be the smallest value such that:
\begin{equation}
\sum_{j \in M}r_j \cdot x_j^{i+1} \le \sqrt{k}\cdot \left[f(E^{i+1})
- f(E^{i})\right].
\label{eq:i-bound-simple}
\end{equation}
Note that some such value must $i$ must exist, since for $i = k$,
both sides are equal to zero.  We now derive a bound on each term on
the right of \eqref{eq:main}.

\begin{lemma}
\label{lem:1}
$\sum_{j \in M} f(E_j^i) \le \sqrt{k}\cdot f(\tO)$.
\end{lemma}
\begin{proof}
Because $i$ is the smallest value for which \eqref{eq:i-bound-simple}
holds, we must have
\[
\sum_{j \in M}r_j \cdot x_j^\ell > \sqrt{k}\cdot \left[f(E^\ell) -
f(E^{\ell-1})\right], \mbox{ for all $\ell \le i$.}
\]
Therefore,
\begin{align*}
\sum_{j \in M} r_j \cdot f(E_j^i) &= \sum_{j \in M} \sum_{\ell =
1}^ir_j \cdot \left[f(E_j^\ell) - f(E_j^{\ell - 1})\right] \\
&= \sum_{j \in M} \sum_{\ell = 1}^i r_j \cdot x_j^i  \\
&= \sum_{\ell = 1}^i \sum_{j \in M} r_j \cdot x_j^i \\
&> \sum_{\ell = 1}^i\sqrt{k} \cdot \left[f(E^{\ell}) - f(E^{\ell -
1})\right] \\
&= \sqrt{k} \cdot f(E^i),
\end{align*}
and so, 
\begin{equation*}
f(E^i) < \frac{1}{\sqrt{k}}\sum_{j \in M}r_j \cdot f(E^i_j) 
\le \frac{1}{\sqrt{k}}\sum_{j \in M}r_j \cdot f(E_j)
\le \frac{1}{\sqrt{k}}\sum_{j \in M}r_j \cdot f(\tO)
= \sqrt{k}\cdot f(\tO). \qedhere
\end{equation*}
\end{proof}

\begin{lemma}
\label{lem:2}
$\sum_{j \in M}f_{E_j^i(O_j)} \le \sqrt{k} \cdot f(\tO)$.
\end{lemma}
\begin{proof}
We consider two cases:
\paragraph{Case: $i < k$.} We have $i + 1 \le k$, and by
\eqref{eq:greedy} we have $f_{E_j^i}(O_j) = y_j^{i + 1} \le r_j\cdot
x_j^{i+1}$ for every machine $j$.  Therefore:

\begin{align*}
\sum_{j \in M}f_{E_j^i}(O_j)
&\le 
\sum_{j \in M}r_j \cdot x_j^{i+1} \\
&\le 
\sqrt{k}\cdot (f(E^{i+1}) - f(E^{i})) \\
&=
\sqrt{k}\cdot f_E^{i}(E^{i+1} \setminus E^i) \\
&\le 
\sqrt{k}\cdot f(E^{i+1} \setminus E^i) \\
&\le
\sqrt{k}\cdot f(\tO).
\end{align*}

\paragraph{Case: $i = k$.}  By submodularity of $f$ and
\eqref{eq:greedy}, we have
\[f_{E_j^i}(O_j) \le f_{E_j^{k-1}}(O_j) = y_j^{k} \le r_j \cdot x_j^{k}.\]
Moreover, since the sequence $x_j^1,\ldots,x_j^k$ is nonincreasing
for all $j$, 
\[
x_j^k \le \frac{1}{k}\sum_{i = 1}^k x_j^i = \frac{1}{k}\cdot f(E_j).
\]
Therefore,
\begin{equation*}
\sum_{j \in M}f_{E_j^i}(O_j)
\le
\sum_{j \in M}\frac{r_j}{k}\cdot f(E_j)
\le
\sum_{j \in M}\frac{r_j}{k}\cdot f(\tO)
= f(\tO).
\end{equation*}
Thus, in both cases, we have $\sum_{j \in M}f_{E_j^i}(O_j) \le
\sqrt{k} \cdot f(\tO)$ as required.
\end{proof}
Applying Lemmas \ref{lem:1} and \ref{lem:2} to the right of
\eqref{eq:main}, we obtain
\[
f(\opt) \le 2 \sqrt{k} \cdot f(\tO),
\]
completing the proof of Theorem \ref{thm:greedi-improved}.
\end{proof}

\begin{corollary}
The distributed greedy algorithm gives a $\frac{(1-1/e)}{2\sqrt{k}}$
approximation for maximizing a monotone submodular function subject
to a cardinality constraint $k$, regardless of how the elements are
distributed.
\end{corollary}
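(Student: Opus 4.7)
The plan is to combine Theorem \ref{thm:greedi-improved} with the classical Nemhauser--Wolsey--Fisher guarantee for the greedy algorithm on cardinality-constrained monotone submodular maximization. Theorem \ref{thm:greedi-improved} already establishes that, no matter how the ground set is partitioned among the machines, the union $E$ of the per-machine greedy selections contains a $k$-element subset $\tO$ with
\[
f(\tO) \ge \frac{1}{2\sqrt{k}} f(\opt).
\]
Since $f$ restricted to subsets of $E$ is still a nonnegative monotone submodular function, and the cardinality constraint is hereditary, any $\alpha$-approximation for the centralized problem applied to the instance with ground set $E$ returns a solution of value at least $\alpha \cdot f(\tO)$.

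In the second round of the \GreeDI algorithm we run \Greedy on $E$, and by the classical analysis of \cite{NWF78} this produces a feasible subset $T \subseteq E$ of size at most $k$ satisfying
\[
f(T) \ge \bigl(1 - 1/e\bigr) \max_{S \subseteq E,\; |S|\le k} f(S) \ge \bigl(1 - 1/e\bigr) f(\tO).
\]
Chaining this with the bound from Theorem \ref{thm:greedi-improved} gives $f(T) \ge \frac{1-1/e}{2\sqrt{k}} f(\opt)$, which is the claimed approximation.

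There is really no technical obstacle here: the proof is a two-line composition of two already-established approximation guarantees. The only point worth flagging is that Theorem \ref{thm:greedi-improved} is proved without any hypothesis on how elements are distributed to the machines, so the resulting approximation holds uniformly over all partitions, justifying the phrase ``regardless of how the elements are distributed'' in the corollary statement.
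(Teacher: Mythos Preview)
Your proposal is correct and takes exactly the approach the paper intends: apply Theorem~\ref{thm:greedi-improved} to bound $f(\tO) \ge \frac{1}{2\sqrt{k}} f(\opt)$ for the best $k$-subset of $E$, then invoke the classical $(1-1/e)$ guarantee of \Greedy on the second-round instance to obtain $f(T) \ge (1-1/e) f(\tO)$. The paper in fact leaves the corollary unproved, but your two-line composition is precisely the intended (and only natural) derivation.
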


\section{A tight example for Deterministic GreeDI}
\label{app:det-greedi-tight}

Here we give a family of examples that show that the GreeDI algorithm
of Mirzasoleiman \etal cannot achieve an approximation better than
$1/\sqrt{k}$.

Consider the following instance of Max $k$-Coverage. We have
${\ell}^2 + 1$ machines and $k = \ell + {\ell}^2$. Let $N$ be a
ground set with ${\ell}^2 + {\ell}^3$ elements, $N = \set{1, 2,
\dots, {\ell}^2 + {\ell}^3}$. We define a coverage function on a
collection $\mathcal{S}$ of subsets of $N$ as follows. In the
following, we define how the sets of $\mathcal{S}$ are partitioned on
the machines.

On machine $1$, we have the following $\ell$ sets from $\opt$: $O_1 =
\set{1, 2, \dots, \ell}$, $O_2 = \set{\ell + 1, \dots, 2\ell}$,
\dots, $O_{\ell} = \set{{\ell}^2 - \ell + 1, \dots, {\ell}^2}$. We
also pad the machine with copies of the empty set.

On machine $i > 1$, we have the following sets. There is a single set
from $\opt$, namely $O'_i = \set{{\ell}^2 + (i - 1) \ell + 1,
{\ell}^2 + (i - 1) \ell + 2, \dots, {\ell}^2 + i \ell}$.
Additionally, we have $\ell$ sets that are designed to fool the
greedy algorithm; the $j$-th such set is $O_j \cup \set{ {\ell}^2 +
(i - 1) \ell + j}$. As before, we pad the machine with copies of the
empty set.

The optimal solution is $O_1, \dots, O_{\ell}$, $O'_1, \dots, O'_{
{\ell}^2}$ and it has a total coverage of ${\ell}^2 + {\ell}^3$.

On the first machine, Greedy picks the $\ell$ sets $O_1, \dots, O_m$
from $\opt$ and ${\ell}^2$ copies of the empty set. On each machine
$i > 1$, Greedy first picks the $\ell$ sets $A_j = O_j \cup \set{
{\ell}^2 + (i - 1) \ell + j}$, since each of them has marginal value
greater than $O'_i$. Once Greedy has picked all of the $A_j$'s, the
marginal value of $O'_i$ becomes zero and we may assume that Greedy
always picks the empty sets instead of $O'_i$.

Now consider the final round of the algorithm where we run Greedy on
the union of the solutions from each of the machines. In this round,
regardless of the algorithm, the sets picked can only cover $\set{1,
\ldots, {\ell}^2}$ (using the set $O_1, \dots, O_{\ell}$) and one
additional item per set for a total of $2 {\ell}^2$ elements. Thus
the total coverage of the final solution is at most $2 {\ell}^2$.
Hence the approximation is at most ${2 \ell^2 \over \ell^2 + \ell^3}
= {2 \over 1 + \ell} \approx {1 \over \sqrt{k}}$.
\end{document}